\definecolor{plum}  {rgb}{.4,0,.4}
\definecolor{BrickRed} {rgb}{0.6,0,0}
\definecolor{DarkBlue} {rgb}{0,0,0.6}
\def\ddefloop#1{\ifx\ddefloop#1\else\ddef{#1}\expandafter\ddefloop\fi}
\def\ddef#1{\expandafter\def\csname b#1\endcsname{\ensuremath{\boldsymbol{#1}}}}
\def\ddef#1{\expandafter\def\csname c#1\endcsname{\ensuremath{\mathcal{#1}}}}
\def\ddef#1{\expandafter\def\csname s#1\endcsname{\ensuremath{\mathsf{#1}}}}
\def\Reals{{\mathbb R}}
\def\E{{\mathbf E}} 
\def\P{{\mathbf P}} 
\def\eps{\varepsilon}
\def\bd#1{{\boldsymbol{#1}}}
\newsavebox{\@brx}
\newcommand{\llangle}[1][]{\savebox{\@brx}{\(\m@th{#1\langle}\)}%
  \mathopen{\copy\@brx\kern-0.5\wd\@brx\usebox{\@brx}}}
\newcommand{\rrangle}[1][]{\savebox{\@brx}{\(\m@th{#1\rangle}\)}%
  \mathclose{\copy\@brx\kern-0.5\wd\@brx\usebox{\@brx}}}
\newenvironment{proof}{\paragraph*{Proof:}}{\hfill$\square$}
\newtheorem{theorem}{Theorem}
\newtheorem{definition}{Definition}
\newtheorem{assumption}{Assumption}
\newtheorem{proposition}{Proposition}
\newtheorem{lemma}{Lemma}
\def\act{\sigma} 
\def\FE{F} 
\def\Wass{\cW}
\def\fE{\mathscr{E}}
\def\fM{\mathscr{M}}
\def\fP{\mathscr{P}}
\def\fU{\mathscr{U}}
\def\deq{:=}
\def\1{{\mathbf{1}}}
\def\ave#1{\langle #1 \rangle}
\def\Law{{\mathrm{Law}}}
\begin{document}

\title{Function approximation by neural nets in the mean-field regime: Entropic regularization and controlled McKean--Vlasov dynamics}
\author{Belinda Tzen \\ \href{mailto:bt2314@columbia.edu}{bt2314@columbia.edu} \\ 
\and Maxim Raginsky \\ \href{mailto:maxim@illinois.edu}{maxim@illinois.edu} }
\date{}
\maketitle

\begin{abstract}
We consider the problem of function approximation by two-layer neural nets with random weights that are ``nearly Gaussian'' in the sense of Kullback-Leibler divergence. Our setting is the mean-field limit, where the finite population of neurons in the hidden layer is replaced by a continuous ensemble. We show that the problem can be phrased as global minimization of a free energy functional on the space of (finite-length) paths over probability measures on the weights. This functional trades off the $L^2$ approximation risk of the terminal measure against the KL divergence of the path with respect to an isotropic Brownian motion prior. We characterize the unique global minimizer and examine the dynamics in the space of probability measures over weights that can achieve it. In particular, we show that the optimal path-space measure corresponds to the F\"ollmer drift, the solution to a McKean-Vlasov optimal control problem closely related to the classic Schr\"odinger bridge problem. While the F\"ollmer drift cannot in general be obtained in closed form, thus limiting its potential algorithmic utility, we illustrate the viability of the mean-field Langevin diffusion as a finite-time approximation under various conditions on entropic regularization. Specifically, we show that it closely tracks the F\"ollmer drift when the regularization is such that the minimizing density is log-concave.\end{abstract}


\section{Introduction}

Beginning with the seminal work of \citet{barron1992,barron1993}, there has been a lot of interest in efficient neural net approximation of functions that can be represented in the form
\begin{align}\label{eq:meanfield}
	f(x) = \int_{\Reals \times \Reals^d} \alpha\act(x;w)\mu(\dif \alpha,\dif w),
\end{align}
where $x$ is the input, $(\alpha,w)$ is the finite-dimensional vector of parameters taking values in $\Reals \times \Reals^d$, $\act(x;w)$ is a suitably chosen nonlinearity, and $\mu$ is a probability measure on the parameter space.  To pass from the continual representation \eqref{eq:meanfield} to a finite net, one could draw a large number $N$ of independent samples $(\alpha^1,w^1),\ldots,(\alpha^N,w^N)$ from $\mu$ and approximate \eqref{eq:meanfield} with
\begin{align}\label{eq:finite_N}
	\hat{f}_N(x) = \frac{1}{N}\sum^N_{i=1}\alpha^i\act(x;w^i).
\end{align}
Concentration-of-measure techniques can be used to show that the finite-$N$ neural net \eqref{eq:finite_N} sharply concentrates around its idealized counterpart \eqref{eq:meanfield}; in addition to the works of Barron mentioned above, see \cite{yukich1995,donahue1997,gurvits1997}. This perspective of conceptualizing neural networks as continuous systems has informed some recent theoretical advances, such as the mean-field theory of two-layer neural nets \citep{chizat_bach18,mei2018meanfield,mei2019meanfield,sirignano2020meanfield_CLT,sirignano2020meanfield_LLN,rotskoff2018meanfield} or the Gaussian universality theory for randomly initialized deep and wide neural nets \citep{neal1996,lee2018gp,lee2019wide,eldan21}. In these works, one starts with the finite object \eqref{eq:finite_N} and then analyzes the so-called mean-field limit as $N \to \infty$.

The mean-field perspective allows us to reason about the evolution of weights during training given i.i.d.\ data $(X_i,Y_i)$ with $f(x) = \E[Y_i|X_i = x]$ as a gradient flow in the space of probability measures over the weights, as this so-called \textit{distributional dynamics} can be captured by a certain nonlinear PDE of McKean--Vlasov type \citep{kolokoltsov2010nonlinear}. The mean-field description preserves the essential features of the optimization landscape that are insensitive to the number of neurons. A finite-size network in this case can be conceived of as providing an empirical distribution over the weights, and, under mild regularity assumptions on the data and on the activation function, we can transfer results from the continuous-time, infinite-width setting to the discrete-time, finite-width setting.

In this paper, we are interested not in the dynamics of neural net weights during training, but rather in the function approximation problem: Given a function $f$, can we find a probability measure $\mu$ of low complexity (in a sense to be made precise shortly), such that the integral on the right-hand side of \eqref{eq:meanfield} provides a good approximation to $f$ (say, in $L^2$ sense)? Our analysis of this problem consists of two complementary parts:  a static formulation, where we seek to minimize a certain free-energy objective; and a dynamic formulation, which characterizes the evolution towards the optimal distribution as a \textit{Brownian transport map} \citep{Mikulincer2024}, i.e., a measurable map that sends a realization of the standard Brownian motion to a sample from the optimal distribution subject to causality constraints.

For the first (static) part, the free energy functional is a linear combination of the $L^2$ risk of a distribution and its Kullback--Leibler divergence from an isotropic Gaussian. The free energy, which has appeared in a different guise in the work of \citet{mei2018meanfield}, is parametrized by the variance $T$ of this Gaussian prior and by the regularization (inverse temperature) parameter $\beta$ that controls the trade-off between these two terms. We show that, under appropriate regularity conditions, the free energy has a unique minimizer and provide explicit upper bounds on both the KL divergence and the $L^2$ Wasserstein distance between this minimizer and the Gaussian prior.

The second (dynamic) part entails setting up a finite-time optimal stochastic control problem for a stochastic differential equation (SDE) on the space of weights.  The control law that achieves minimum quadratic running cost subject to a desired terminal density corresponds to the F\"ollmer drift \citep{follmer1985reversal,daipra1991reciprocal,lehec2013entropy,eldan2018diffusion} that also solves the entropic optimal transport problem for the optimal measure. Working with the F\"ollmer drift here acquires the additional complication that it depends on the law of the process that minimizes the free energy; due to this nonlinear dependence of the terminal cost on the target probability law, we have to employ the machinery of \textit{controlled McKean--Vlasov dynamics} \citep{carmona2015MKV}. The resulting distributional dynamics consists of two coupled PDEs, the forward (Fokker--Planck) equation that governs the evolution of the probability density of the weights and the backward (Hamilton--Jacobi--Bellman) equation that governs the evolution of the value function of the McKean--Vlasov control problem. A key structural property of the optimal distribution that arises from the static analysis, the so-called \textit{Boltzmann fixed-point property}, turns out to be crucial for resolving this coupling between the forward and the backward dynamics.

\section{Problem setup}

We consider the problem of approximating a target function $f : \cX \to \Reals$ by a two-layer neural net with $N$ hidden-layer neurons. Here, $\cX$ is a Borel subset of $\Reals^p$, and the neural nets will take the form
\begin{align*}
	\hat{f}_N(x; \bd{w}) = \frac{1}{N}\sum^N_{i=1} \act(x; w^i),
\end{align*}
where $x \in \Reals^p$ is the input (or feature) vector, $\bd{w} = (w^1,\ldots,w^N)$ is the $N$-tuple of weights $w^i \in \Reals^d$, and $\act : \Reals^p \times \Reals^d \to \Reals$ is an activation function. This form of the net is simpler than the representations in \eqref{eq:meanfield} and \eqref{eq:finite_N}: the coefficients $\alpha^1,\dots,\alpha^N$ are all fixed and set to $1$.

We will measure the accuracy of approximation using $L^2(\pi)$ risk, where $\pi$ is a fixed Borel probability measure supported on $\cX$:
\begin{align}\label{eq:L2risk}
	R_N(\bd{w}) \deq \| f - \hat{f}_N(\cdot; \bd{w}) \|^2_{L^2(\pi)} = \int_\cX \pi(\dif x) |f(x)-\hat{f}_N(x;\bd{w})|^2.
\end{align}
It is expedient to express the risk \eqref{eq:L2risk} as
\begin{align}\label{eq:L2risk_alt}
	R_N(\bd{w}) = R_0 + \frac{2}{N}\sum^N_{i=1} \tilde{f}(w^i) + \frac{1}{N^2}\sum^N_{i=1}\sum^N_{j=1} K(w^i,w^j),
\end{align}
where $R_0 \deq \E_\pi[|f(X)|^2]$, $\tilde{f}(w) \deq - \E_\pi[f(X)\act(X;w)]$, and $K(w,w') \deq \E_\pi[\act(X;w)\act(X;w')]$. The alternative form \eqref{eq:L2risk_alt} of the $L^2$ risk makes it apparent that it depends only on the empirical distribution of the weights (and, in particular, is invariant under permutations of the neurons). Moreover, if we define the mapping $\hat{f} : \cX \times \fP(\Reals^d) \to \Reals$ by
\begin{align*}
	\hat{f}(x; \mu) \deq \int_{\Reals^d} \act(x; w)\mu(\dif w),
\end{align*}
and the associated $L^2(\pi)$ risk
\begin{align}\label{eq:L2risk_mf}
	R(\mu) \deq \| f - \hat{f}(\cdot;\mu)\|^2_{L^2(\pi)} = R_0 + 2\int_{\Reals^d} \tilde{f}\dif \mu + \int_{\Reals^d \times \Reals^d} K \dif\,(\mu \otimes \mu),
\end{align}
then $\hat{f}_N(x;\bd{w}) = \hat{f}(x;\hat{\mu}_{\bd{w}})$ and $R_N(\bd{w}) = R(\hat{\mu}_{\bd{w}})$, where $\hat{\mu}_{\bd{w}} \deq \frac{1}{N}\sum^N_{i=1} \delta_{w^i}$ is the empirical distribution of $\bd{w}$. This lifting from finite populations of neurons to continual ensembles is the essence of the mean-field theory of neural nets. Our focus in this work will be on managing the trade-off between the risk $R(\mu)$ and the relative entropy (or Kullback–Leibler divergence) between $\mu$ and an isotropic Gaussian prior.

Specifically, we will show that we can introduce a stochastic dynamics in the space of weights that leads to the probability law $\mu^\star$ achieving the optimal trade-off between risk and relative entropy. For now, we keep the discussion informal. First, let us give a heuristic motivation for a finite net with $N$ hidden units. Let $\bd{B}^i = (B^i_t)_{t \ge 0}$, $i \in [N]$, be $N$ independent copies of a standard $d$-dimensional Brownian motion. Consider the following system of $N$ coupled It\^o SDEs:
\begin{align}\label{eq:particles}
	\dif W^i_t = u^i_t \dif t + \dif B^i_t, \qquad W^i_0 \equiv 0, i \in [N];\, t \in [0,T]
\end{align}
for some finite time horizion $T > 0$, where $\bd{u}^i = \{u^i_t\}_{t \ge 0}$ are $N$ progressively measurable $\Reals^d$-valued processes.  The dynamics in \eqref{eq:particles} governs the evolution of the weights of $N$ hidden neurons, where each neuron adds a drift $\bd{u}^i$ to its  Brownian motion $\bd{B}^i$. Given a regularization parameter $\beta > 0$, let us define the expected cost for the $i$th neuron by
\begin{align}\label{eq:particle_cost}
	J^{N,i}_{\beta,T}(\bd{u}^1,\ldots,\bd{u}^N) \deq \E\left[\frac{1}{2}\int^T_0 \|u^i_t\|^2 \dif t + \frac{\beta}{2}R_N(\bd{W}_T)\right],
\end{align}
where $\bd{W}_T \deq (W^1_T,\ldots,W^N_T)$ are the weights of the $N$ neurons at time $t=T$ generated according to \eqref{eq:particles} and $R_N(\cdot)$ is the $L^2$ risk as defined in \eqref{eq:L2risk} and \eqref{eq:L2risk_alt}. Each choice of the drift processes $\bd{u}^1,\ldots,\bd{u}^N$ corresponds to a \textit{strategy} for updating the weights of the $N$ hidden units. Now, if each $\bd{u}^i$ is identically zero, then $\bd{W}_1 = (W^1_T,\dots,W^N_T)$ will be an $N$-tuple of i.i.d.\ copies of a zero-mean $d$-dimensional Gaussian random vector with covariance matrix $TI_d$. Thus, the time horizon $T$ controls the variance of the isotropic Gaussian prior, and each  nonzero strategy $(\bd{u}^1,\dots,\bd{u}^N)$ will steer the probability distribution of $\bd{W}_1$ away from the Gaussian prior. Hence, the first term on the right-hand side of \eqref{eq:particle_cost} is a regularization term that penalizes large excursions from the prior, and the regularization parameter $\beta$ controls the relative importance of the risk term and the regularization term.

Since $R_N(\cdot)$ is symmetric w.r.t.\ permutations of the $N$ neurons, we may limit ourselves to \textit{exchangeable} strategies, i.e., when the joint probability law of $(\bd{u}^i,\bd{B}^i)_{1 \le i \le N}$ is invariant under permutations of the neurons. For any such strategy, the costs in \eqref{eq:particle_cost} are all equal, and it is natural to pose the problem of minimizing \eqref{eq:particle_cost} over all exchangeable strategies. This minimization is generally intractable; to simplify the analysis, we can  pass to the mean-field limit by taking $N \to \infty$ \citep{kolokoltsov2010nonlinear,carmona2015MKV}. Under reasonable assumptions, we may then argue that, as $N$ tends to infinity, the empirical distribution $\hat{\mu}_t \deq N^{-1}\sum^N_{i=1}\delta_{W^i_t}$ converges to a Borel probability measure $\mu_t$ governing the distribution of the weights of a ``representative'' neuron in an infinitely wide net. The weight vector of this representative neuron will evolve according to the It\^o SDE
\begin{align}\label{eq:mean_field_SDE}
	\dif W_t = u_t \dif t + \dif B_t, \qquad W_0 \equiv 0, \, t \in [0,T]
\end{align}
where we now choose the drift $\bd{u}$ to minimize the mean-field expected cost
\begin{align}\label{eq:mean_field_cost}
	J_{\beta,T}(\bd{u}) \deq \E\left[ \frac{1}{2}\int^T_0 \|u_t\|^2 \dif t\right] + \frac{\beta}{2}R(\mu_T), \qquad \mu_T := {\rm Law}(W_T)
\end{align}
where $R(\cdot)$ is the mean-field $L^2$ risk as defined in \eqref{eq:L2risk_mf}. As will be shown below, this is equivalent to entropic regularization in path space, and this bias towards parsimony in the dynamics corresponds explicitly to an optimal control problem regarding the transfer of weights from their initialization to their distribution at the terminal time $T$. In fact, we will give two equivalent formulations of entropic regularization: a \textit{static} one, operating at the level of probability measures on the space of continuous paths from $[0,T]$ into $\Reals^d$, and a \textit{dynamic} one, pertaining to the above optimal control problem. Incidentally, this control problem has a very special structure due to the \textit{nonlinear} dependence of the terminal cost in \eqref{eq:mean_field_cost} on the target measure $\mu_T$. As such, it is an instance of a \textit{McKean--Vlasov} optimal control problem \citep{carmona2015MKV}, which requires more sophisticated machinery compared to the standard formulation of stochastic control of diffusion processes \citep{fleming1975control}. Let us denote by $\mu^\star = \mu^\star_{\beta,T}$ the probability law of $W_T$ that achieves the minimum in \eqref{eq:mean_field_cost} subject to \eqref{eq:mean_field_SDE} (leaving aside for now the questions of existence and uniqueness).

Now, it is of interest to compare the \textit{finite-time} particle dynamics \eqref{eq:particles} or its mean-field limit \eqref{eq:mean_field_SDE} to the following horizon-free dynamics of the gradient descent type:
\begin{align}\label{eq:particle_NLD}
	\dif \hat{W}^i_t = -\frac{1}{2}\left(\beta \nabla \Psi(\hat{W}^i_t; \hat{\mu}_t) + \frac{\hat{W}^i_t}{T}\right) \dif t + \dif B_t, \qquad t \ge 0,\, i \in [N]
\end{align}
for the $N$ coupled neurons or
\begin{align}\label{eq:mean_field_NLD}
	\dif \hat{W}_t = -\frac{1}{2}\left( \beta \nabla \Psi(\hat{W}_t; \mu_t) + \frac{\hat{W}_t}{T}\right)\dif t + \dif B_t, \qquad t \ge 0
\end{align}
for the mean-field limit, where
\begin{align}
	\Psi(w;\mu) := \tilde{f}(w) + \int_{\Reals^d} K(w,\tilde{w})\mu(\dif \tilde{w}),
\end{align}
$\hat{\mu}_t = N^{-1}\sum^N_{i=1}\delta_{\hat{W}^i_t}$ is the empirical distribution of $\hat{\bd{W}}_t = (\hat{W}^1_t,\ldots,\hat{W}^N_t)$, and $\mu_t$ is the law of $\hat{W}_t$. (We  note that the SDE \eqref{eq:mean_field_NLD} is also of the McKean--Vlasov type due to the explicit dependence of the drift on the marginal probability law $\mu_t$ of $\hat{W}_t$.) This is the setting studied in earlier work on two-layer neural nets in the mean-field regime \citep{mei2018meanfield,rotskoff2018meanfield,mei2019meanfield,sirignano2020meanfield_CLT}, and one of the main messages of this line of work is that, under mild regularity assumptions on the target function $f$, the activation function $\act$, and the initial distribution in \eqref{eq:mean_field_NLD}, the probability law $\mu_t$ of $\hat{W}_t$ converges weakly to a unique $\hat{\mu}^\star = \hat{\mu}^\star_{\beta,T}$ as $t \to \infty$. Regarding this, one can show the following:
\begin{enumerate}
	\item The two probability laws $\hat{\mu}^\star$ and $\mu^\star$ are equal, and the key distinction between \eqref{eq:mean_field_SDE} (with the optimal drift) and \eqref{eq:mean_field_NLD} is that the former solves a finite-time optimal control problem and hits the optimal distribution $\mu^\star$ at the prescribed finite time $T$, whereas the latter is a stochastic gradient flow that only attains the optimal distribution asymptotically as $t \to \infty$.
	\item When the product $\beta T$ is suitably small, the probability law $\mu_t$ will be $\eps$-close to $\mu^\star$ in $L^2$ Wasserstein distance provided $t \sim T \log (\frac{1}{\eps})$; above that threshold value, one can only guarantee $\eps$-closeness for $t \sim Te^{\beta}\log (\frac{1}{\eps})$.
\end{enumerate}
In particular, when it comes to showing that $\mu_t$ is close to $\mu^\star$ on a timescale comprable to $T$ when $\beta T$ is small, we take the usual route and relate \eqref{eq:mean_field_NLD} to the Langevin diffusion process, governed by
\begin{align}
	\dif V_t = - \frac{1}{2}\left(\beta \nabla \Psi(V_t; \mu^\star) + \frac{V_t}{T}\right) \dif t + \dif B_t, \qquad t \ge 0
\end{align}
such that the law of $V_t$ can also be shown to converge to $\mu^\star$ as $t \to \infty$. It was shown recently by \cite{chizat22MKV} and \cite{nitanda2022MKV} that, under certain regularity assumptions, this convergence is exponentially fast. We give an elementary proof of a weaker result, which suffices for our purposes and may be of independent interest.

\subsection{Notation} \sloppypar For any $t > 0$, we will denote by $\gamma_t$ the centered Gaussian measure on $\Reals^d$ with covariance matrix $t I_d$. The space of Borel probability measures on $\Reals^d$ will be denoted by $\fP(\Reals^d)$, and $\fP_p(\Reals^d)$, for $p \ge 1$, will stand for the set of $\mu \in \fP(\Reals^d)$ with finite $p$th moment. The $L^p$ Wasserstein distance between $\mu,\nu \in \fP_p(\Reals^d)$ is
\begin{align*}
	\Wass_p(\mu,\nu) \deq \left(\inf_{W \sim \mu, V \sim \nu} \E [\| W - V \|^p] \right)^{1/p},
\end{align*}
where the infimum is over all random elements $(W,V)$ of $\Reals^d \times \Reals^d$ with marginals $\mu$ and $\nu$, and $\| \cdot \|$ denotes the Euclidean $(\ell_2)$ norm on $\Reals^d$. For each $T > 0$, we consider the Wiener space $(\Omega_T,(\cF_t)_{t \in [0,T]},(W_t)_{t \in [0,T]},\bd{\gamma}_T)$ for $T > 0$, where $\Omega_T = C([0,T]; \Reals^d)$ is the space of continuous functions $\omega : [0,T] \to \Reals^d$; $\bd{W} = (W_t)_{t \in [0,T]}$ is the canonical coordinate process, $W_t(\omega) \deq \omega(t)$; $(\cF_t)_{t \in [0,T]}$ is the natural filtration of $\bd{W}$; and $\bd{\gamma}_T$ is the Wiener measure, under which $\bd{W}$ is the standard $d$-dimensional Brownian motion on $[0,T]$. Given any probability measure $\bd{\mu}_T$ on $\Omega_T$, we will denote by $\mu_t$ the probability law of $W_t$ under $\bd{\mu}_T$; for the Wiener measure $\bd{\gamma}_T$, this notation is consistent with our defintion of $\gamma_t$. We will use $D(\cdot \| \cdot)$ to denote the relative entropy between any two probability measures either on $\Omega$ or on $\Reals^d$. Additional notation will be introduced in the sequel as needed.

\section{Path-space free energy minimization problem}

Let $\beta, T > 0$ be given. We consider minimizing, over all $\bd{\mu}_T \in \fP(\Omega_T)$, the following free energy functional:
\begin{equation}\label{eq:fe_functional}
F_{\beta,T}(\bd{\mu}_T) := \frac{1}{2}R(\mu_T) + \frac{1}{\beta}D(\bd{\mu}_T||\bd{\gamma}_T).
\end{equation}
The idea here is to view each $\bd{\mu}_T \in \fP(\Omega)$ as the probability law of a process $\bd{W} = (W_t)_{t \in [0,T]}$ with continuous sample paths, such that $\mu_T = {\rm Law}(W_T) \in \fP(\Reals^d)$ gives a candidate probability measure for the weights of the neural net in the mean-field regime. The free energy in \eqref{eq:fe_functional} trades off the $L^2(\pi)$ risk of $\mu_T$ against the relative entropy of the path-space measure $\bd{\mu}_T$ w.r.t.\ the Wiener measure $\bd{\gamma}_T$, and the relative importance of the two terms is controlled by the parameter $\beta$. The time horizon parameter $T$ determines the variance of the Gaussian ``prior'' $\gamma_T$.

At first sight, the problem of minimizing \eqref{eq:fe_functional} appears rather intractable, involving probability measures over an infinite-dimensional function space. However, a closer examination reveals a fortuitous decoupling:  the relative entropy term can be decomposed via the chain rule, giving
\begin{equation}\label{eq:kl_chain_rule}
D(\bd{\mu}_T\|\bd{\gamma}_T) = \int_{\Reals^d} \mu_T(\dif x)\underbrace{D(\bd{\mu}^x_T\| \bd{\gamma}^x_T)}_{(*)} + D(\mu_T||\gamma_T),
\end{equation}
where $\bd{\mu}^x_T$ and $\bd{\gamma}^x_T$ are the conditional laws $\bd{\mu}_T$ and $\bd{\gamma}_T$ given $W_T = x$. Because the term $(*)$ in \eqref{eq:kl_chain_rule} corresponding to each $x \in \Reals^d$ is nonnegative and equals zero if and only if $\bd{\mu}^x_T = \bd{\gamma}^x_T$, we find that, for any given $\bd{\mu}_T$, the probability measure $\bar{\bd{\mu}}_T$ given by the mixture
\begin{equation*}
	\bar{\bd{\mu}}_T = \int_{\Reals^d} \mu_T(\dif x) \bd{\gamma}^x_T
\end{equation*}
attains a smaller value of the free energy. Indeed, since $\bar{\mu}_T = \mu_T$ and $D(\bar{\bd{\mu}}_T \|\bd{\gamma}_T) = D(\mu_T \| \gamma_T)$, we have
\begin{align*}
	F_{\beta,T}(\bd{\mu}_T) &= \frac{1}{2}R(\mu_T) + \frac{1}{\beta} D(\bd{\mu}_T \| \bd{\gamma}_T) \\
	&\ge \frac{1}{2}R(\mu_T) + \frac{1}{\beta} D(\mu_T \| \gamma_T) \\
	&= \frac{1}{2}R(\bar{\mu}_T) + \frac{1}{\beta} D(\bar{\bd{\mu}}_T \| \bd{\gamma}_T) \\
	&= F_{\beta,T}(\bar{\bd{\mu}}_T).
\end{align*}
We can summarize this observation in the following:
\begin{proposition} The minimum value of the free energy satisfies
	\begin{align}
		\inf_{\bd{\mu}_T \in \fP(\Omega_T)} F_{\beta,T}(\bd{\mu}_T) &= \inf_{\bd{\mu}_T \in \fP(\Omega_T)} F_{\beta,T}(\bar{\bd{\mu}}_T) \label{eq:fe_reduction_1} \\
		&= \inf_{\mu \in \fP(\Reals^d)} \left[ \frac{1}{2}R(\mu) + \frac{1}{\beta} D(\mu \| \gamma_T)\right] \label{eq:fe_reduction_2}
	\end{align}
Moreover, if $\mu^\star \in \fP(\Reals^d)$ achieves the infimum in \eqref{eq:fe_reduction_2}, then
\begin{align}\label{eq:opt_pathspace_mu}
\bd{\mu}^\star_T = \int_{\Reals^d} \mu^\star(\dif x) \bd{\gamma}^x_T
\end{align}
achieves the infimum in the left-hand side of \eqref{eq:fe_reduction_1}.
\end{proposition}
Thus, the problem of minimizing the free energy \eqref{eq:fe_functional} over path-space measures $\bd{\mu}_T \in \fP(\Omega_T)$ reduces to minimizing the functional $\frac{1}{2}R(\mu) + \frac{1}{\beta}D(\mu \| \gamma_T)$ over probability measures $\mu \in \fP(\Reals^d)$. Abusing notation slightly, we will denote this functional by $F_{\beta,T}(\mu)$ as well. Assuming that a minimizing $\mu^\star$ exists and is unique, the optimal path-space measure $\bd{\mu}^\star_T$ in \eqref{eq:opt_pathspace_mu} has a very specific structure. Indeed, we recognize $\bd{\gamma}^x_T$ as the probability law of the standard \textit{Brownian bridge} pinned to the point $x$ at time $T$, and therefore $\bd{\mu}^\star_T$ is the probability law of the standard Brownian motion conditioned to have the marginal distribution $\mu^\star$ at time $T$ \citep{follmer1985reversal}. Moreover, we can readily compute the Radon--Nikodym derivative of $\bd{\mu}^\star_T$ w.r.t.\ the Wiener measure $\bd{\gamma}_T$. Since $D(\mu^\star \|\gamma_T) \le \beta F_{\beta,T}(\mu^\star) < \infty$, $\mu^\star$ is absolutely continuous w.r.t.\ $\gamma_T$. Therefore, for any bounded measurable function $h : \Omega_T \to \Reals$,
\begin{align*}
	\E_{\bd{\mu}^\star_T}[h(\bd{W})] &= \int_{\Reals^d} \mu^\star(\dif x) \int_\Omega \bd{\gamma}^x_T(\dif \omega) h(\omega) \\
	&= \int_{\Reals^d} f(x) \gamma_T(\dif x) \int_\Omega \bd{\gamma}^x_T(\dif \omega) h(\omega) \\
	&= \int_\Omega \bd{\gamma}_T(\dif \omega) \frac{\dif\mu^\star}{\dif\gamma_T}(\omega(T)) h(\omega) \\
	&= \E_{\bd{\gamma}_T}\left[\frac{\dif\mu^\star}{\dif\gamma_T}(W_T) h(\bd{W})\right],
\end{align*}
which gives
\begin{align*}
	\frac{\dif\bd{\mu}^\star_T}{\dif\bd{\gamma}_T}(\bd{W}) = \frac{\dif\mu^\star}{\dif\gamma_T}(W_T).
\end{align*}

\subsection{The static formulation:  structural result}

We now show that the infimum in \eqref{eq:fe_reduction_2} is achieved by a unique probability measure $\mu^\star \in \fP(\Reals^d)$. This result, stated as Theorem~\ref{thm:static} below, relies on the following assumptions (also made by \citet{mei2018meanfield}):

\begin{assumption}\label{as:bounded} The target function $f : \Reals^p \to \Reals$ and the activation function $\act : \Reals^p \times \Reals^d \to \Reals$ are bounded: $\| f \|_\infty, \| \act \|_\infty \le \kappa_1$. Moreover, for each $w \in \Reals^d$ the gradient $\nabla_w \act(X;w)$ is $\kappa_1$-subgaussian when $X \sim \pi$.
\end{assumption}

\begin{assumption}\label{as:Lipschitz} The functions $\tilde{f}$ and $K$ are differentiable and Lipschitz-continuous, with Lipschitz-continuous gradients: $\| \nabla \tilde{f}(w)\| \le \kappa_2$, $\| \nabla K(w,\tilde{w})\| \le \kappa_2$, $\| \nabla \tilde{f}(w) - \nabla \tilde{f}(w') \| \le \kappa_2 \| w-w' \|$, $\| \nabla K(w,\tilde{w}) - \nabla K(w',\tilde{w}') \| \le \kappa_2 \|(w,\tilde{w})-(w',\tilde{w}') \|$.
\end{assumption}
\noindent Throughout the paper, we will use $\kappa$ to denote a generic quantity that grows like $O({\rm poly}(\max\{\kappa_1,\kappa_2\}))$ and $c$ to denote a generic absolute constant. The values of $\kappa$ and $c$ may change from line to line.

\begin{theorem}\label{thm:static} Under Assumptions~\ref{as:bounded} and \ref{as:Lipschitz}, the free energy $F_{\beta,T}(\cdot)$ admits a unique minimizer $\mu^\star = \mu^\star_{\beta,T}$, such that the following hold:
	\begin{enumerate}
		\item $\mu^\star$ is absolutely continuous w.r.t.\ $\gamma_T$ and satisfies the {\em Boltzmann fixed-point condition}
		\begin{align}\label{eq:Boltzmann_FP}
			\mu^\star(\dif w) = \frac{1}{Z^\star}\exp\left(-\beta\Psi(w;\mu^\star)\right)\gamma_T(\dif w),
		\end{align}
		where the potential $\Psi : \Reals^d \times \fP_2(\Reals^d) \to \Reals$ is given by
		\begin{align*}
			\Psi(w; \mu) \deq \tilde{f}(w) + \int_{\Reals^d} K(w,\tilde{w})\mu(\dif \tilde{w})
		\end{align*}
		and $Z^\star = Z(\beta,T; \mu^\star)$ is the normalization constant.
		\item The risk of $\mu^\star$ is bounded by
		\begin{align}\label{eq:FE_upper_bound}
			R(\mu^\star) \le 2\FE_{\beta,T}(\mu^\star) \le \inf_{\mu \in \fP_2(\Reals^d)} \left[R(\mu) + \frac{1}{\beta T}M^2_2(\mu)\right] + \frac{d}{\beta T} \log(2\kappa\beta T+1),
		\end{align}
		where $M^2_2(\mu) \deq \int_{\Reals^d}\|w\|^2 \mu(\dif w)$ is the second moment of $\mu$.
		\item The relative entropy and the squared $L^2$ Wasserstein distance between $\mu^\star$ and the Gaussian prior $\gamma_T$ are bounded by
		\begin{align}\label{eq:entropy_W2}
			D(\mu^\star \| \gamma_T) \le \kappa T\beta^2 \qquad \text{and} \qquad \Wass^2_2(\mu^\star,\gamma_T) \le \kappa T^2\beta^2.
		\end{align}
		\item If $f = \hat{f}(\cdot;\mu^\circ)$ for some $\mu^\circ \in \fP_2(\Reals^d)$, then
		\begin{align}\label{eq:realizable}
			R(\mu^\star) \le \frac{2}{\beta}D(\mu^\circ\|\gamma_T).
		\end{align}
	\end{enumerate}

\end{theorem}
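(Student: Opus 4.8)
The plan is to establish the four items in order, treating item~1 as the structural foundation for items~2--4. For \textbf{item~1} I would run the direct method on $\fP_2(\Reals^d)$: since $\mu\mapsto\hat f(\cdot;\mu)$ is affine, $R$ is convex, and since $\|\act\|_\infty\le\kappa_1$ the kernels $\tilde f,K$ are bounded and continuous, so $R$ is bounded and weakly continuous; meanwhile $D(\cdot\|\gamma_\tau)$ is weakly lower semicontinuous with weakly compact sublevel sets (the Donsker--Varadhan bound $\int\tfrac{\|w\|^2}{4\tau}\,\mu(\dif w)\le D(\mu\|\gamma_\tau)+\log\int e^{\|w\|^2/4\tau}\,\gamma_\tau(\dif w)$ has finite right-hand side, so finite entropy forces a bounded second moment and hence tightness). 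A minimizing sequence thus has bounded entropy (as $R\ge0$) and a weak limit in $\fP_2(\Reals^d)$, which minimizes $\FE_{\beta,\tau}$ by lower semicontinuity; uniqueness follows from convexity of $R$ and strict convexity of $D(\cdot\|\gamma_\tau)$. For \eqref{eq:Boltzmann_FP} I would compute the one-sided derivative of $\eps\mapsto\FE_{\beta,\tau}((1-\eps)\mu^\star+\eps\mu)$ at $0$ for arbitrary $\mu\in\fP_2(\Reals^d)$ with $D(\mu\|\gamma_\tau)<\infty$: the risk contributes $2\int\Psi(\cdot;\mu^\star)\,\dif(\mu-\mu^\star)$ (using symmetry of $K$) and the entropy contributes $\int\log\tfrac{\dif\mu^\star}{\dif\gamma_\tau}\,\dif\mu-D(\mu^\star\|\gamma_\tau)$ (note $\mu^\star\ll\gamma_\tau$, else $\FE_{\beta,\tau}(\mu^\star)=\infty>\FE_{\beta,\tau}(\gamma_\tau)$). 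Nonnegativity of this derivative for all admissible $\mu$ forces $\Psi(\cdot;\mu^\star)+\tfrac{\tau}{\beta}\log\tfrac{\dif\mu^\star}{\dif\gamma_\tau}$ to be $\gamma_\tau$-a.e.\ constant (so the density cannot vanish on a set of positive $\gamma_\tau$-measure), which is \eqref{eq:Boltzmann_FP}; $\|\Psi\|_\infty\le 2\kappa_1^2$ makes $Z^\star$ finite and positive.

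For \textbf{item~2}, the bound $R(\mu^\star)\le2\FE_{\beta,\tau}(\mu^\star)$ is immediate from $D\ge0$, and $2\FE_{\beta,\tau}(\mu^\star)=\inf_\mu[R(\mu)+\tfrac{2\tau}{\beta}D(\mu\|\gamma_\tau)]$. For $\mu$ with a density, $D(\mu\|\gamma_\tau)=\tfrac1{2\tau}M^2_2(\mu)+\tfrac d2\log(2\pi\tau)-h(\mu)$ ($h$ the differential entropy), so the bracketed quantity equals $R(\mu)+\tfrac1\beta M^2_2(\mu)+\tfrac{\tau d}{\beta}\log(2\pi\tau)-\tfrac{2\tau}{\beta}h(\mu)$. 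Take $\nu$ a near-minimizer of $R(\cdot)+\tfrac1\beta M^2_2(\cdot)$ and put $\mu=\nu*\gamma_s$: then $M^2_2(\mu)=M^2_2(\nu)+ds$; the entropy-power bound $h(\nu*\gamma_s)\ge h(\gamma_s)=\tfrac d2\log(2\pi e s)$ bounds $-h(\mu)$ from above; and because $\tilde f,K$ have Lipschitz gradients (Assumption~\ref{as:Lipschitz}) the first-order term of the Gaussian average vanishes, so $|R(\nu*\gamma_s)-R(\nu)|\le\kappa ds$ rather than $O(\kappa\sqrt{ds})$. Hence $2\FE_{\beta,\tau}(\mu^\star)\le[R(\nu)+\tfrac1\beta M^2_2(\nu)]+ds(\kappa+\tfrac1\beta)+\tfrac{\tau d}{\beta}\log\tfrac{\tau}{es}$; choosing $s=\tau/(\kappa\beta+1)$ makes $ds(\kappa+\tfrac1\beta)=\tfrac{\tau d}{\beta}$ and $\tfrac{\tau d}{\beta}\log\tfrac{\tau}{es}=\tfrac{\tau d}{\beta}(\log(\kappa\beta+1)-1)$, so the error collapses to exactly $\tfrac{\tau d}{\beta}\log(\kappa\beta+1)\le\tfrac{\tau d}{\beta}\log(2\kappa\beta+1)$; letting $\nu$ approach the infimum gives \eqref{eq:FE_upper_bound}.

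For \textbf{item~3}, \eqref{eq:Boltzmann_FP} gives $D(\mu^\star\|\gamma_\tau)=-\log Z^\star-\tfrac{\beta}{\tau}\int\Psi(\cdot;\mu^\star)\,\dif\mu^\star$, and Jensen gives $-\log Z^\star\le\tfrac{\beta}{\tau}\int\Psi(\cdot;\mu^\star)\,\dif\gamma_\tau$, so $D(\mu^\star\|\gamma_\tau)\le\tfrac{\beta}{\tau}\int\Psi(\cdot;\mu^\star)\,\dif(\gamma_\tau-\mu^\star)\le\tfrac{2\kappa_2\beta}{\tau}\Wass_1(\mu^\star,\gamma_\tau)$, using Kantorovich--Rubinstein duality and $\|\nabla_w\Psi(\cdot;\mu^\star)\|\le2\kappa_2$ (Assumption~\ref{as:Lipschitz}). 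Talagrand's transportation inequality $\Wass_2^2(\mu^\star,\gamma_\tau)\le 2\tau D(\mu^\star\|\gamma_\tau)$ together with $\Wass_1\le\Wass_2$ then yields $D(\mu^\star\|\gamma_\tau)\le2\kappa_2\beta\sqrt{2D(\mu^\star\|\gamma_\tau)/\tau}$, i.e.\ $D(\mu^\star\|\gamma_\tau)\le\kappa\beta^2/\tau$, and reinserting this into Talagrand's inequality gives $\Wass_2^2(\mu^\star,\gamma_\tau)\le\kappa\beta^2$, proving \eqref{eq:entropy_W2}. Finally, \textbf{item~4} is immediate: if $f=\hat f(\cdot;\mu^\circ)$ then $R(\mu^\circ)=0$, so $R(\mu^\star)\le2\FE_{\beta,\tau}(\mu^\star)\le2\FE_{\beta,\tau}(\mu^\circ)=\tfrac{2\tau}{\beta}D(\mu^\circ\|\gamma_\tau)$, which is \eqref{eq:realizable}.

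I expect the main obstacle to be making the first-variation argument in item~1 fully rigorous --- justifying the one-sided differentiability of $\eps\mapsto D((1-\eps)\mu^\star+\eps\mu\,\|\,\gamma_\tau)$ and excluding a positive-$\gamma_\tau$-measure zero set of $\tfrac{\dif\mu^\star}{\dif\gamma_\tau}$ --- although this can be bypassed by exhibiting a solution of \eqref{eq:Boltzmann_FP} directly via a fixed-point/contraction argument (using boundedness and Lipschitzness of $\Psi$) and then invoking convexity to identify it with $\mu^\star$. The bookkeeping in item~2 is the only other delicate point, and it closes cleanly because the residual $\tfrac{\tau d}{\beta}$ term is exactly cancelled by the $\log(1/e)$ arising at the optimal mollification scale $s=\tau/(\kappa\beta+1)$.
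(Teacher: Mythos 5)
Your proposal is essentially correct, and for items~1, 2, and 4 it follows the same route as the paper: existence by the direct method (the paper restricts to the explicit sublevel set $\fM=\{\mu:D(\mu\|\gamma_\tau)\le\tfrac{\beta}{2\tau}R(\gamma_\tau)\}$ and cites weak compactness of entropy sublevel sets, whereas you sketch tightness directly via a second-moment estimate); uniqueness by strict convexity; the Boltzmann fixed point by a first-variation argument (the paper handles the positivity and zero-set issues you flag with an explicit mixture/contradiction argument and a localization to sets $S_{\eps_0}$, then a fixed-point alternative is not needed); and item~2 by mollifying with $\gamma_s$, bounding the risk change by $\kappa ds$ via the Hessian, lower-bounding differential entropy under convolution, and optimizing over $s$ (you land on $\log(\kappa\beta+1)$ rather than the paper's $\log(2\kappa\beta+1)$, which is an immaterial constant).

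For item~3 you take a genuinely different route. The paper applies the Gaussian log-Sobolev inequality $D(\mu^\star\|\gamma_\tau)\le\tfrac{\tau}{2}I(\mu^\star\|\gamma_\tau)$, using the fixed point to identify the score as $-\tfrac{\beta}{\tau}\nabla\Psi(\cdot;\mu^\star)$ and the gradient bound from Assumption~\ref{as:Lipschitz} to conclude $D\le\kappa\beta^2/\tau$, then Talagrand for $\Wass_2^2$. You instead start from the fixed-point identity $D(\mu^\star\|\gamma_\tau)=-\log Z^\star-\tfrac{\beta}{\tau}\int\Psi\,\dif\mu^\star$, apply Jensen to $Z^\star$, Kantorovich--Rubinstein duality with the Lipschitz bound $\|\nabla\Psi\|\le 2\kappa_2$, and then bootstrap through Talagrand once: $D\le\tfrac{2\kappa_2\beta}{\tau}\Wass_1\le\tfrac{2\kappa_2\beta}{\tau}\sqrt{2\tau D}$, which closes to $D\le 8\kappa_2^2\beta^2/\tau$ and then $\Wass_2^2\le\kappa\beta^2$. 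Both approaches use only the Lipschitz bound on $\nabla\Psi$ and give the same order; yours replaces the log-Sobolev inequality with a self-improving inequality via Kantorovich duality, which is more elementary in the sense that it needs only Talagrand (and, in this Gaussian setting, both LSI and Talagrand are available, so the gain is modest, but the argument is clean and is worth noting as an alternative).

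One small caveat on your item~1: the one-sided derivative computation $\int\log\tfrac{\dif\mu^\star}{\dif\gamma_\tau}\,\dif\mu-D(\mu^\star\|\gamma_\tau)$ is only valid a priori if $\mu\ll\mu^\star$, and on the set where $\rho^\star=0$ the putative derivative is $-\infty$; you can turn this into the positivity claim (as the paper does with the singular-mixture entropy decomposition), but you are right that this is the step demanding the most care, and the paper's explicit two-stage argument (positivity first, then variations supported on $S_{\eps_0}$) is exactly the bookkeeping needed to make it airtight.
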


\begin{proof}
The risk $R(\gamma_T)$ is finite by virtue of Assumption~\ref{as:bounded}, and evidently
 	\begin{align*}
		0 \le \inf_{\mu \in \fP_2(\Reals^d)} \FE_{\beta,T}(\mu) \le F_{\beta,T}(\gamma_T) = \frac{1}{2}R(\gamma_T).
	\end{align*}
Therefore, we can restrict the minimization to the set
\begin{align*}\fM \deq \left\{ \mu \in \fP_2(\Reals^d) : D(\mu \| \gamma_T) \le \frac{\beta}{2}R(\gamma_T)\right\},
\end{align*}
which is weakly compact by Lemma~\ref{lm:weak_compactness}. By Assumptions~\ref{as:bounded} and \ref{as:Lipschitz}, $F_{\beta,T}$ is a weakly lower-semicontinuous functional, and therefore attains an infimum on $\fM$. Uniqueness follows from the fact that $F_{\beta,T}$ is a positive linear combination of a convex functional $\mu \mapsto R(\mu)$ and a strictly convex functional $\mu \mapsto D(\mu \| \gamma_T)$, and is therefore strictly convex. Hence, $F_{\beta,T}$ has a unique minimizer $\mu^\star = \mu^\star_{\beta,T} \in \fP_2(\Reals^d)$.

To prove that $\mu^\star$ satisfies the Boltzmann fixed-point condition \eqref{eq:Boltzmann_FP}, we proceed analogously to the proof of Lemma~10.3 of \citet{mei2018meanfield}. Let $\lambda$ denote the Lebesgue measure on $\Reals^d$. We first show that $\mu^\star$ has an almost everywhere positive density w.r.t.\ $\lambda$. 

Since $D(\mu^\star \| \gamma_T) < \infty$, $\mu^\star$ is absolutely continuous w.r.t.\ $\lambda$. Thus, the density $\rho^\star \deq \frac{\dif \mu^\star}{\dif \lambda}$ exists. To show that $\rho^\star > 0$ almost everywhere, suppose, by way of contradiction, that there exists a  set $K \subset \Reals^d$ with $\lambda(K) > 0$, such that $\rho^\star = 0$ on $K$. Without loss of generality, we may assume that $K$ is compact (otherwise, we can replace $K$ by its intersection with a ball of suitably large radius). Let $\rho$ be an arbitrary probability density supported on $K$, such that the differential entropy
\begin{align*}
	h(\rho) = - \int_{\Reals^d} \rho(w)\log \rho(w) \dif w
\end{align*}
is finite (for instance, we can take $\rho(w) = \frac{\1_K(w)}{\lambda(K)}$). Consider the mixture $\mu^\eps = (1-\eps)\mu^\star + \eps \mu$ for some $\eps \in (0,1)$, where $\dif \mu  \deq \rho \dif\lambda$. Then
\begin{align*}
	 R(\mu^\eps) - R(\mu^\star) &= 2\eps \int_{\Reals^d}\tilde{f}(w)[\rho(w)-\rho^\star(w)]\dif w \nonumber\\
	&\qquad + 2\eps(1-\eps) \int_{\Reals^d \times \Reals^d} K(w,\tilde{w}) [\rho^\star(w)\rho(\tilde{w}) - \rho^\star(w)\rho^\star(\tilde{w})] \dif w \dif \tilde{w} \nonumber\\
	& \qquad + \eps^2 \int_{\Reals^d \times \Reals^d} K(w,\tilde{w}) [\rho(w)\rho(\tilde{w}) - \rho^\star(w)\rho^\star(\tilde{w})] \dif w \dif \tilde{w} 
\end{align*}
and
\begin{align*}
	& D(\mu^\eps \| \gamma_T) - D(\mu^\star \| \gamma_T) \\
	& = h(\rho^\star) - h((1-\eps)\rho^\star + \eps \rho) + \frac{\eps}{2T} \int_{\Reals^d} \|w\|^2 [\rho(w)-\rho^\star(w)] \dif w \\
	&= \eps[h(\rho^\star) - h(\rho)] + \eps \log \eps + (1-\eps) \log (1-\eps) +  \frac{\eps}{2T} \int_{\Reals^d} \|w\|^2 [\rho(w)-\rho^\star(w)] \dif w,
\end{align*}
where we have used the fact that $\mu^\star$ and $\mu$ are mutually singular, so that
\begin{align*}
	h((1-\eps)\rho^\star + \eps \rho) = (1-\eps)h(\rho^\star) + \eps h(\rho) - \eps \log \eps - (1-\eps)\log (1-\eps).
\end{align*}
Thus, for $\eps \in (0,1)$ we can upper-bound the difference $\FE_{\beta,T}(\mu^\eps) - \FE_{\beta,T}(\mu^\star)$ by an expression of the form
\begin{align*}
	C_1 \eps + C_2 (\eps \log \eps + (1-\eps) \log (1-\eps))
\end{align*}
for some positive constants $C_1,C_2 > 0$, which takes negative values for all sufficiently small $\eps > 0$. As a consequence, we see that, for all sufficiently small $\eps > 0$,
\begin{align*}
	\FE_{\beta,T}(\mu^\eps) - \FE_{\beta,T}(\mu^\star) < 0,
\end{align*}
which contradicts the optimality of $\mu^\star$. Thus, $\rho^\star > 0$ almost everywhere. 

Next, we show that $\Psi(w; \mu^\star) + \frac{1}{\beta} \big(\log \rho^\star(w) + \frac{\|w\|^2}{2T}\big)$ is constant almost everywhere. Fix some $\eps_0 > 0$ and consider the set
\begin{align*}
	S_{\eps_0} \deq \left\{ w \in \Reals^d : \rho^\star(w) \ge \eps_0 \text{ and } \|w\| \le \eps_0 \right\}.
\end{align*}
Fix a differentiable function $v$, such that (i) $v = 0$ on $\Reals^d \setminus S_{\eps_0}$, (ii) $\|v\|_\infty \le 1$, (iii) $\int_{\Reals^d} v(w)\dif w = 0$. Then $\rho^\star + \eps v$ is a probability density for all $\eps \in [-\eps_0,\eps_0]$, so for the probability measures $\dif\mu^\eps \deq \dif\mu^\star + \eps v \dif \lambda$ we have
\begin{align*}
	\lim_{\eps \to 0} \frac{\FE_{\beta,T}(\mu^\eps) - \FE_{\beta,T}(\mu^\star)}{\eps} =\int_{\Reals^d} \left[\Psi(w; \mu^\star) + \frac{1}{\beta}\left(\log \rho^\star(w) + \frac{\|w\|^2}{2T}\right)\right] v(w) \dif w \ge 0.
\end{align*}
Repeating the same argument with $-v$ instead of $v$, we see that
\begin{align*}
	\int_{\Reals^d} \left[\Psi(w; \mu^\star) + \frac{1}{\beta}\left(\log \rho^\star(w) + \frac{\|w\|^2}{2T}\right)\right] v(w) \dif w = 0
\end{align*}
for all $v$ satisfying conditions (i)--(iii) above. This implies that $\Psi(w; \mu^\star) + \frac{1}{\beta}\left(\log \rho^\star(w) + \frac{\|w\|^2}{2T}\right) = {\rm const}$ for all $w \in S_{\eps_0}$. Since $\lambda (\Reals^d \setminus \cup_{\eps_0 > 0}S_{\eps_0}) = 0$, we see that
\begin{align}\label{eq:Boltzmann_FP_2}
	\Psi(w; \mu^\star) + \frac{1}{\beta}\left(\log \rho^\star(w) + \frac{\|w\|^2}{2T}\right) = \xi(\beta,T; \mu^\star)
\end{align}
holds almost everywhere for some constant $\xi(\beta,T; \mu^\star)$. Since $\mu^\star$ is a probability measure, we see that $\xi(\beta,T; \mu^\star) = - \frac{1}{\beta} Z(\beta,T; \mu^\star)$. Exponentiating both sides of \eqref{eq:Boltzmann_FP_2} and rearranging gives the Boltzmann fixed-point equation \eqref{eq:Boltzmann_FP}.

To prove \eqref{eq:FE_upper_bound}, we proceed analogously to the proof of Lemma~10.5 of \citet{mei2018meanfield}. Pick some $\eps > 0$, to be chosen later. Let $G,\tilde{G}$ be two independent samples from $\gamma_1$. Then, for any $\mu \in \fP_2(\Reals^d)$,
\begin{align*}
R(\mu * \gamma_\eps) - R(\mu) & = 2\int_{\Reals^d} \E[\tilde{f}(w + \sqrt{\eps}G)-\tilde{f}(w)] \mu(\dif w) \nonumber\\
& \qquad + \int_{\Reals^d \times \Reals^d} \E[K(w+\sqrt{\eps}G,\tilde{w} + \sqrt{\eps}\tilde{G}) - K(w,\tilde{w})]\mu(\dif w)\mu(\dif\tilde{w}),
\end{align*}
where, using the intermediate value theorem, we can write
\begin{align}
	\E[\tilde{f}(w+\sqrt{\eps}G)] &= \tilde{f}(w) + \sqrt{\eps}\, \E[\langle \nabla \tilde{f}(w), G \rangle] + \frac{\eps}{2} \E[\langle G, \nabla^2 \tilde{f}(\xi) G\rangle]
\end{align}
for some (random) point $\xi$ in $\Reals^d$. Since $\nabla^2 \tilde{f}$ is bounded by Assumption~\ref{as:Lipschitz}, we conclude that
\begin{align*}
|\E[\tilde{f}(w+\sqrt{\eps}G)]-\tilde{f}(w)| \le \kappa \eps d.
\end{align*}
An analogous argument gives
\begin{align*}
	|\E[K(w+\sqrt{\eps}G,\tilde{w}+\sqrt{\eps}\tilde{G})]-K(w,\tilde{w})| \le \kappa \eps d.
\end{align*}
Thus,
\begin{align}\label{eq:R_bound}
	R(\mu * \gamma_\eps) \le R(\mu) + \kappa d\eps.
\end{align}
Moreover, for any $\mu \in \fP_2(\Reals^d)$, the convolution $\mu * \gamma_\eps$ has a smooth density w.r.t.\ $\lambda$, say, $\rho_\eps$, and therefore the differential entropy $h(\rho_\eps)$ is well-defined. Consequently,
\begin{align*}
	D(\mu * \gamma_\eps \| \gamma_T) &= - h(\rho_\eps) + \frac{1}{2T}\int_{\Reals^d} \|w\|^2 \rho_\eps(w)\dif w + \frac{d}{2}\log(2\pi T) \\
	&= - h(\rho_\eps) + \frac{1}{2T}M^2_2(\mu) + \frac{\eps d}{2T} + \frac{d}{2}\log(2\pi T).
\end{align*}
Since differential entropy increases under convolution \citep{cover2006infotheory}, we can further estimate
\begin{align*}
	h(\rho_\eps) \ge h(\gamma_\eps) = \frac{d}{2}\log(2\pi e\eps),
\end{align*}
which gives
\begin{align}\label{eq:ent_bound}
	D(\mu * \gamma_\eps \| \gamma_T)  \le \frac{1}{2T}M^2_2(\mu) + \frac{\eps d}{2T} + \frac{d}{2}\log \frac{T}{e\eps}.
\end{align}
From Eqs.~\eqref{eq:R_bound} and \eqref{eq:ent_bound}, it follows that, for any $\mu \in \fP_2(\Reals^d)$,
\begin{align*}
	\FE_{\beta,T}(\mu * \gamma_\eps) &= \frac{1}{2}R(\mu * \gamma_\eps) + \frac{1}{\beta} D(\mu * \gamma_\eps \| \gamma_T) \\
	&\le \frac{1}{2}\left[ R(\mu) + \frac{1}{\beta T}M^2_2(\mu)\right] + \kappa d\eps + \frac{d\eps}{2\beta T} + \frac{d}{2\beta}\log \frac{T}{e\eps}.
\end{align*}
By virtue of the optimality of $\mu^\star$, we obtain 
\begin{align*}
	\FE_{\beta,T}(\mu^\star) \le \frac{1}{2}\inf_{\mu \in \fP_2(\Reals^d)} \left[ R(\mu) + \frac{1}{\beta T}M^2_2(\mu)\right] + \frac{d}{2\beta T}\inf_{\eps \ge 0} \left[ (2\kappa\beta T+1)\eps +  T\log \frac{T}{e\eps} \right].
\end{align*}
Optimizing over $\eps$, we get \eqref{eq:FE_upper_bound}.

Next, we move on to \eqref{eq:entropy_W2}. From the Boltzmann fixed-point equation \eqref{eq:Boltzmann_FP}, it follows that that $\nabla \log \frac{\dif \mu^\star}{\dif \gamma_T} = - \beta \nabla \Psi(\cdot; \mu^\star)$. Therefore, by the log-Sobolev inequality \eqref{eq:LSI} for $\gamma_T$,
\begin{align*}
	D(\mu^\star \| \gamma_T) &\le \frac{T}{2} I(\mu^\star \|\gamma_T)\\
	&= \frac{T\beta^2}{2} \int_{\Reals^d}  \| \nabla \Psi(\cdot; \mu^\star)\|^2 \dif\mu^\star\\
	&\le \kappa T \beta^2.
\end{align*}
Moreover, by Talagrand's entropy-transport inequality \eqref{eq:Talagrand},
\begin{align*}
	\Wass^2_2(\mu^\star,\gamma_T) \le 2T D(\mu^\star \| \gamma_T) \le \kappa T^2\beta^2.
\end{align*}
Finally, if $f = \hat{f}(\cdot;\mu^\circ)$ for some $\mu^\circ \in \fP_2(\Reals^d)$, then $R(\mu^\circ) = 0$, and evidently
\begin{align*}
	R(\mu^\star) \le 2\FE_{\beta,T}(\mu^\star) \le 2\FE_{\beta,T}(\mu^\circ) = \frac{2}{\beta}D(\mu^\circ \|\gamma_T),
\end{align*}
which concludes the proof.
\end{proof}

From the risk bound \eqref{eq:FE_upper_bound}, it readily follows that, for any fixed $T > 0$,
\begin{align*}
	\lim_{\beta \to \infty} \inf_{\mu \in \fP_2(\Reals^d)} \FE_{\beta,T}(\mu) = \frac{1}{2}\inf_{\mu \in \fP_2(\Reals^d)} R(\mu).
\end{align*}
However, a more intriguing message of Theorem~\ref{thm:static} is that it is also meaningful to consider the regime where either $\beta$ or $T$ (or both) are small. For example, we could take $T = \eps$ and $\beta = \eps^{-1/2}$ for some $\eps \in (0,1)$. As an illustration, consider the realizable case, i.e., when the target function $f$ is equal to  $\hat{f}(\cdot; \mu^\circ)$ for some $\mu^\circ \in \fP_2(\Reals^d)$. Then Eq.~\eqref{eq:realizable} gives
\begin{align*}
	R(\mu^\star) \le 2\eps^{1/2} D(\mu^\circ \| \gamma_T),
\end{align*}
so $R(\mu^\star)$ will be on the order of $\eps^{1/2}$ for $D(\mu^\circ \| \gamma_T) = O(1)$.  This regime can be viewed as a mean-field counterpart of the notion that, for certain target functions $f$, with high probability there exist good neural-net approximations near a random Gaussian initialization \citep{allenzhu2019over}. Moreover, such a good approximation can be obtained from the random Gaussian initialization by applying a transport mapping to the weights \citep{ji2020transport}. The following theorem gives a precise statement:

\begin{theorem}\label{thm:transport} Let $\beta, T > 0$ be given, where $0 < \beta T < \frac{1}{c\kappa_2}$. Then there exists a Lipschitz-continuous transportation mapping $\Phi : \Reals^d \to \Reals^d$ such that all of the following holds with probability at least $1-\delta$ for a tuple $\bd{W} = (W^1,\ldots,W^N)$ of i.i.d.\ draws from $\gamma_T$:
	\begin{enumerate}
		\item The neural net with transported weights $\hat{f}_N(\cdot; \Phi(\bd{W})) \deq \frac{1}{N}\sum^N_{i=1}\act(\cdot; \Phi(W^i))$ satisfies
		\begin{align}\label{eq:transport_risk}
			\|f - \hat{f}_N(\cdot; \Phi(\bd{W}))\|_{L^2(\pi)} \le \| f - \hat{f}(\cdot; \mu^\star_{\beta,T})\|_{L^2(\pi)} + \kappa \sqrt{\frac{\log(1/\delta)}{N}};
		\end{align}
		\item The transported weights $\Phi(W^i)$ are uniformly close to the i.i.d.\ Gaussian weights $W^i$:
		\begin{align}\label{eq:transport_distance}
			\max_{i \in [N]} \| \Phi(W^i) - W^i \| \le \kappa \beta T + \kappa \sqrt{T(\log N + \log (1/\delta))};
		\end{align}
		\item The transported neural net $\hat{f}_N(\cdot; \Phi(\bd{W}))$ and the random Gaussian neural net $\hat{f}_N(\cdot; \bd{W})$ are close in $L^2(\pi)$ norm:
	\begin{align}\label{eq:transport_vs_init}
		\left\| \hat{f}_N(\cdot; \Phi(\bd{W})) - \hat{f}_N(\cdot; \bd{W}) \right\|^2_{L^2(\pi)} \le \kappa\beta T + \kappa  \sqrt{\frac{T\log(1/\delta)}{N}}.
	\end{align}
	\end{enumerate}
\end{theorem}

\begin{proof} Let $\mu^\star = \mu^\star_{\beta,T}$ be the global minimizer of the free energy $\FE_{\beta,T}$. By Theorem~\ref{thm:static}, $\mu^\star$ has an almost everywhere positive density w.r.t.\ $\gamma_T$, and therefore has an almost everywhere positive density $\rho^\star$ w.r.t.\ the Lebesgue measure on $\Reals^d$:
	\begin{align*}
		\rho^\star(w) = \frac{1}{Z} \exp\left(-V(w;\mu^\star)\right), \qquad V(w;\mu^\star) \deq \frac{1}{2T}\|w\|^2 + \beta \Psi(w;\mu^\star).
	\end{align*}
By the theory of optimal transport \citep{Villani_topics}, there exists a mapping $\Phi : \Reals^d \to \Reals^d$, such that (i) $\mu^\star$ is equal to the pushforward of $\gamma_T$ by $\Phi$ and (ii) $\Wass^2_2(\mu^\star,\gamma_T) = \E[\|\Phi(W)-W\|^2]$ for $W \sim \gamma_T$. By Assumption~\ref{as:Lipschitz}, the Hessian of $V(w; \mu^\star)$ satisfies
\begin{align*}
	\nabla^2 V(w;\mu^\star) = \frac{1}{T}I_d + \beta \nabla^2 \Psi(w; \mu^\star) \succeq \frac{1}{T}(1-c\kappa_2\beta T)I_d.
\end{align*}
Since $1-c\kappa_2\beta T > 0$, $V(w;\mu^\star)$ is strongly convex, and therefore the optimal transport mapping $\Phi$ is Lipschitz-continuous by Caffarelli's regularity theorem \citep{caffarelli2000regularity,kolesnikov2010transport}:
\begin{align}\label{eq:T_Lip}
	\|\Phi(w)-\Phi(\tilde{w})\| \le \frac{1}{1-c\kappa_2\beta T} \|w-\tilde{w}\|.
\end{align}
Now, let $W^1,\ldots,W^N$ be i.i.d.\ samples from $\gamma_T$. By the triangle inequality,
\begin{align*}
	\| f - \hat{f}_N(\cdot; \Phi(\bd{W})) \|_{L^2(\pi)} &\le \| f - \hat{f}(\cdot; \mu^\star) \|_{L^2(\pi)} + \| \hat{f}(\cdot;\mu^\star) - \hat{f}_N(\cdot; \Phi(\bd{W})) \|_{L^2(\pi)}.
\end{align*}
Since $\Phi$ is the (optimal) map that transports $\gamma_T$ to $\mu^\star$, $\Phi(W^1),\ldots,\Phi(W^N)$ are i.i.d.\ according to $\mu^\star$, so in particular $\E[\act(\cdot;\Phi(W^i))] = \hat{f}(\cdot; \mu^\star)$. Therefore, we can apply the high-probability version of Maurey's lemma due to \citet{ji2020transport} (reproduced in Appendix~\ref{app:lemmas}) to the functions $g(\cdot;w) \deq \act(\cdot;T(w))$ to conclude that
\begin{align*}
	 \| \hat{f}(\cdot;\mu^\star) - \hat{f}_N(\cdot; \Phi(\bd{W})) \|_{L^2(\pi)} &\le c\sup_{w} \|\act(\cdot; \Phi(w))\|_{L^2(\pi)} \sqrt{\frac{\log(1/\delta)}{N}} \\
	 &\le \kappa \sqrt{\frac{\log(1/\delta)}{N}}
\end{align*}
with probability at least $1-\delta$. This proves \eqref{eq:transport_risk}.

Consider now the mapping $\Delta(w) \deq \|\Phi(w)-w\|$. By Jensen's inequality, Theorem~\ref{thm:static}, and the optimality of $\Phi$,
\begin{align*}
	\E[\Delta(W^i)] = \E[\|\Phi(W^i)-W^i\|] \le \sqrt{\E[\|\Phi(W^i)-W^i\|^2]} = \Wass_2(\mu^\star,\gamma_T) \le \kappa\beta T.
\end{align*}
Moreover, from \eqref{eq:T_Lip} we see that $\Delta$ is Lipschitz-continuous:
\begin{align*}
	|\Delta(w)-\Delta(\tilde{w})| &= \left|\|\Phi(w)-w\| - \|\Phi(\tilde{w})-\tilde{w}\|\right| \\
	&\le \|\Phi(w)-\Phi(w')\| + \|w-w'\| \\
	&\le L \|w-w'\|, 
\end{align*}
where $L \deq \frac{2-c\kappa_2\beta T}{1-c\kappa_2\beta T} \le 3$ provided $\beta T$ is smaller than $ \frac{1}{2c\kappa_2}$. Therefore, the following Gaussian concentration inequality holds for every $i \in [N]$:
\begin{align*}
	\P \left\{ \Delta(W^i) \ge \E[\Delta(W^i)] + r\right\} \le e^{-\frac{r^2}{18T}}, \qquad \forall r > 0
\end{align*}
\citep[Theorem~5.6]{Boucheron_etal_book}. By the union bound, \eqref{eq:transport_distance} holds with probability at least $1-\delta$.

Finally, let us consider the function
\begin{align*}
	\Gamma(\bd{w}) &\deq \left\| \hat{f}_N(\cdot;\Phi(\bd{w})) - \hat{f}_N(\cdot; \bd{w}) \right\|^2_{L^2(\pi)}  \\
	&= \frac{1}{N^2} \left\|\sum^N_{i=1}\act(\cdot; \Phi(w^i)) - \sum^N_{i=1}\act(\cdot; w^i) \right\|^2_{L^2(\pi)}.
\end{align*}
Using the definition and the properties of $K(w,\tilde{w})$, we have
\begin{align*}
	\Gamma(\bd{w}) &= \frac{1}{N^2} \sum^N_{i=1}\sum^N_{j=1} [K(w^i,w^j) - K(w^i,\Phi(w^j))] \nonumber\\
	& \qquad + \frac{1}{N^2}\sum^N_{i=1}\sum^N_{j=1} [K(\Phi(w^i),\Phi(w^j))-K(w^i,\Phi(w^j))] \\
	& \le \frac{\kappa}{N} \sum^N_{i=1} \| \Phi(w^i) - w^i \| \\
	& = \frac{\kappa}{N} \sum^N_{i=1} \Delta(w^i).
\end{align*}
Consequently, for any $r > 0$,
\begin{align*}
	\P\left\{ \Gamma(\bd{W}) \ge \kappa (\E[\Delta(W^1)] + r)\right\} &\le \P \left\{ \frac{1}{N}\sum^N_{i=1}\Delta(W^i) \ge \E[\Delta(W^1)] + r \right\} \\
	&\le e^{-\frac{Nr^2}{18T}}.
\end{align*}
Since $\E[\Delta(W^1)] \le \kappa\beta T$, we see that \eqref{eq:transport_vs_init} holds with probability at least $1-\delta$.
\end{proof}

\subsection{The dynamic formulation:  a McKean--Vlasov optimal control problem}

Our next order of business is to present a stochastic control interpretation of the problem of minimizing the free energy \eqref{eq:fe_functional}. Given an admissible \textit{control}, i.e., any progressively measurable $\Reals^d$-valued process $\bd{u} = (u_t)_{t \in [0,T]}$ satisfying
\begin{align*}
	\E\left[\int^T_0 \|u_t\|^2\dif t\right] < \infty,
\end{align*}
we consider the It\^o SDE
\begin{align}\label{eq:MKV_dynamics}
	\dif W_t = u_t \dif t + \dif B_t, \qquad W_0 \equiv 0;\, t \in [0,T]
\end{align}
We wish to choose $\bd{u}$ to minimize the following mean-field counterpart to the cost \eqref{eq:particle_cost}:
\begin{align}\label{eq:MKV_cost}
	J_{\beta,T}(\bd{u}) \deq \E\left[\frac{1}{2}\int^T_0 \|u_t\|^2 \dif t\right] + \frac{\beta}{2}R(\mu^{\bd{u}}_T),
\end{align}
where $\mu^{\bd{u}}_t$ is the probability law of $W_t$ for $t \in [0,T]$. To motivate this dynamic optimization problem, consider first the case of zero drift: $u_t \equiv 0$ for all $t \in [0,T]$. Then the resulting process $\bd{W}$ is simply the Brownian motion $\bd{B}$, so that in particular $W_T = B_T \sim \gamma_T$, and the resulting expected cost is proportional to the $L^2(\pi)$ risk of the Gaussian prior $\gamma_T$. By adding a nonzero drift $\bd{u}$, we perturb the Brownian path $\bd{B}$, and the expected cost \eqref{eq:MKV_cost} captures the trade-off between the strength of this perturbation and the $L^2(\pi)$ risk of $\hat{f}(\cdot; \mu^{\bd{u}}_T)$.

Specifically, we will construct a flow of measures $\bd{\mu}^\star = \{\mu^\star_t\}_{t \in [0,T]}$ with densities $\rho^\star_t$, such that: (i) $\mu^\star_0 = \delta_0$ (the Dirac measure concentrated at the origin), (ii) $\mu^\star_T = \mu^\star$ (the unique minimizer of the free energy $\FE_{\beta,T}$), and (iii) the evolution of $\bd{\mu}^\star$ is governed by a system of two coupled nonlinear PDEs,
\begin{subequations}
\begin{align}
	\partial_t \rho^\star_t(w) &= \nabla_w \cdot \left(\rho^\star_t(w) \nabla_w V^\star(w,t)\right) + \frac{1}{2} \Delta_w \rho^\star_t(w) \label{eq:FPE_optimal}\\
	\partial_t V^\star(w,t) &= - \frac{1}{2}\Delta_w V^\star(w,t) + \frac{1}{2} \| \nabla_w V^\star(w,t)\|^2 \label{eq:HJB_optimal}
\end{align}
\end{subequations}
for $(w,t) \in \Reals^d\times [0,T]$, where \eqref{eq:FPE_optimal} has the Dirac delta initial condition $\rho^\star_0(w) = \delta(w)$ and \eqref{eq:HJB_optimal} has terminal condition $V^\star(w,T) = \beta\big(R_0 + \int_{\Reals^d}\tilde{f}\dif\mu^\star_T + \Psi(w; \mu^\star_T)\big)$. The forward PDE \eqref{eq:FPE_optimal} is the Fokker--Planck equation, and the backward PDE \eqref{eq:HJB_optimal} is the Hamilton--Jacobi--Bellman equation.

The problem of minimizing $D(\bd{\mu}_T\|\bd{\gamma}_T)$, subject to the attainment of a specific density at time $T$, is the classical \emph{Schr\"odinger bridge problem} (see \citet{chen2016bridges} and references therein). In particular, for a diffusion process
$$
\dif W_t = u_t\dif t + \dif B_t,\quad X_0 = 0,\ t\in [0,T]
$$
the objective is to minimize $\E\big[\frac{1}{2}\int_0^T \|u_t\|^2\dif t\big]$ subject to $W_T\sim \mu$ for some desired $\mu$. By Girsanov's theorem, the control cost and the path-space divergence are precisely the same quantity:
$$
\E\Bigg[\frac{1}{2}\int_0^T \|u_t\|^2\dif t\Bigg] = D\big(\Law(W_{[0,T]})\|\Law(B_{[0,T]})\big).$$ 
However, in a problem where the terminal cost depends on the terminal measure, this dependency propagates through the entire interval over which the process runs, leading to an instance of \emph{McKean-Vlasov dynamics}, for which it is generally impossible to obtain the optimal control in closed form. Somewhat surprisingly, though, in this case we can obtain an exact characterization for both the optimal control law and the optimal cost in terms of the minimizer $\mu^\star$ of the free energy $F_{\beta,T}$, and the Boltzmann fixed-point condition \eqref{eq:Boltzmann_FP} plays the key role in guaranteeing that the corresponding forward-backward system admits a solution.

\subsection{Solution of the optimal control problem}

\begin{theorem}\label{thm:dynamic} Let $\mu^\star$ be the (unique) minimizer of the free energy $\FE_{\beta,T}(\mu)$. Then the optimal controlled process solves the It\^o SDE
	\begin{align}\label{eq:optimal_SDE}
		\dif W_t = - \nabla_w V^\star(W_t,t)\dif t + \dif B_t, \qquad t \in [0,T];\, \, W_0 = 0
		\end{align}
		where
	\begin{align}\label{eq:value_function}
		V^\star(w,t) \deq -\log \E\big[\exp\left(-\beta \Psi(B_T;\mu^\star)\right) \big|  B_{t} = w\big].
	\end{align}
	Moreover, under \eqref{eq:optimal_SDE}, $W_T$ is distributed according to $\mu^\star$, and the above optimal control achieves
	\begin{align*}
		\inf_{\bd{u}} J_{\beta,T}(\bd{u}) = \beta \FE_{\beta,T}(\mu^\star).
	\end{align*}
\end{theorem}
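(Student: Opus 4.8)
The plan is to recognize the control cost in \eqref{eq:MKV_cost} as a multiple of a relative entropy on path space, use the chain rule for relative entropy to collapse the McKean--Vlasov problem onto the static problem of Theorem~\ref{thm:static}, and then exhibit an explicit control attaining the resulting bound via an $h$-transform of the rescaled Wiener measure.

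First I would pass to path space. Let $\mathbf{Q}$ denote the law on $C([0,1];\Reals^d)$ of the rescaled Brownian motion $\{\sqrt{\tau}B_t\}$ started at $0$, and $\P^{\bd u}$ the law of the solution of \eqref{eq:MKV_dynamics}. For any admissible control the finite-energy condition forces $\P^{\bd u}\ll\mathbf{Q}$ and, by Girsanov's theorem, $D(\P^{\bd u}\|\mathbf{Q})=\frac{1}{2\tau}\E[\int_0^1\|u_t\|^2\dif t]$; thus
\[
	J_{\beta,\tau}(\bd u)=\tau D(\P^{\bd u}\|\mathbf{Q})+\frac{\beta}{2}R(\mu^{\bd u}_1),
\]
and $\mu^{\bd u}_1\in\fP_2(\Reals^d)$ since $\E\|W_1\|^2\le 2\E\int_0^1\|u_t\|^2\dif t+2\tau d<\infty$. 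Disintegrating $\P^{\bd u}$ and $\mathbf{Q}$ according to the value of the endpoint $W_1$ and using the chain rule for relative entropy, $D(\P^{\bd u}\|\mathbf{Q})=D(\mu^{\bd u}_1\|\gamma_\tau)+\E_{\mu^{\bd u}_1}[D(\P^{\bd u}(\cdot\mid W_1)\,\|\,\mathbf{Q}(\cdot\mid W_1))]\ge D(\mu^{\bd u}_1\|\gamma_\tau)$, because the time-$1$ marginal of $\mathbf{Q}$ is exactly $\gamma_\tau$. Hence
\[
	J_{\beta,\tau}(\bd u)\ \ge\ \tau D(\mu^{\bd u}_1\|\gamma_\tau)+\frac{\beta}{2}R(\mu^{\bd u}_1)\ =\ \beta\,\FE_{\beta,\tau}(\mu^{\bd u}_1)\ \ge\ \beta\,\FE_{\beta,\tau}(\mu^\star),
\]
the last step by Theorem~\ref{thm:static}. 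This establishes the lower bound $\inf_{\bd u}J_{\beta,\tau}(\bd u)\ge\beta\FE_{\beta,\tau}(\mu^\star)$.

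For the matching upper bound I would construct the Schr\"odinger bridge from $\delta_0$ to $\mu^\star$ relative to $\mathbf{Q}$. Set $\phi(w,t)\deq\E[\exp(-\frac{\beta}{\tau}\Psi(B_\tau;\mu^\star))\mid B_{\tau t}=w]$, so that $V^\star=-\tau\log\phi$ as in \eqref{eq:value_function}. Since $\Psi(\cdot;\mu^\star)$ is bounded with Lipschitz gradient under Assumptions~\ref{as:bounded}--\ref{as:Lipschitz}, the function $\phi$ is bounded above and below by positive constants, is smooth in $(w,t)$ for $t<1$, and is space--time harmonic for $\partial_t+\frac{\tau}{2}\Delta$; in particular $\{\phi(W_t,t)\}$ is a $\mathbf{Q}$-martingale, and $\phi(0,0)=\E_{\gamma_\tau}[\exp(-\frac{\beta}{\tau}\Psi(\cdot;\mu^\star))]=Z^\star$ because $B_\tau\sim\gamma_\tau$ and by the Boltzmann fixed-point equation \eqref{eq:Boltzmann_FP}. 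Let $\P^\star$ be the path-space measure defined by $\dif\P^\star/\dif\mathbf{Q}=\phi(W_1,1)/\phi(0,0)=\frac{1}{Z^\star}\exp(-\frac{\beta}{\tau}\Psi(W_1;\mu^\star))$. By the Doob $h$-transform (equivalently, Girsanov applied to the exponential martingale $\phi(W_t,t)/Z^\star$), under $\P^\star$ the canonical process solves \eqref{eq:optimal_SDE} with drift $-\nabla_wV^\star=\tau\nabla_w\log\phi$; its time-$1$ marginal equals $\frac{1}{Z^\star}\exp(-\frac{\beta}{\tau}\Psi(\cdot;\mu^\star))\gamma_\tau=\mu^\star$, again by \eqref{eq:Boltzmann_FP}; and, because $\dif\P^\star/\dif\mathbf{Q}$ depends on the path only through its endpoint, the regular conditional law of $\P^\star$ given $W_1$ coincides with that of $\mathbf{Q}$, so the bridge term in the chain rule vanishes and $D(\P^\star\|\mathbf{Q})=D(\mu^\star\|\gamma_\tau)$. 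Consequently the control $u^\star_t=-\nabla_wV^\star(W_t,t)$ is admissible, since $\frac{1}{2\tau}\E_{\P^\star}[\int_0^1\|u^\star_t\|^2\dif t]=D(\mu^\star\|\gamma_\tau)\le\kappa\beta^2/\tau<\infty$ by \eqref{eq:entropy_W2}, and
\[
	J_{\beta,\tau}(\bd u^\star)=\tau D(\mu^\star\|\gamma_\tau)+\frac{\beta}{2}R(\mu^\star)=\beta\,\FE_{\beta,\tau}(\mu^\star),
\]
matching the lower bound. Thus $\bd u^\star$ is optimal, $W_1\sim\mu^\star$ under \eqref{eq:optimal_SDE}, and $\inf_{\bd u}J_{\beta,\tau}(\bd u)=\beta\FE_{\beta,\tau}(\mu^\star)$.

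The step I expect to demand the most care is the achievability argument: checking that the F\"ollmer drift $-\nabla_wV^\star$ is well defined and square-integrable, that $\phi$ is regular enough to identify $\P^\star$ with the law of the solution of \eqref{eq:optimal_SDE} and to apply It\^o's formula, and that the endpoint disintegration and chain rule for relative entropy hold at the required level of generality. All of this is standard for F\"ollmer drifts and Schr\"odinger bridges \citep{follmer1985reversal,daipra1991reciprocal,lehec2013entropy} once one verifies, from Assumptions~\ref{as:bounded}--\ref{as:Lipschitz}, that $\Psi(\cdot;\mu^\star)$ is bounded with Lipschitz gradient; the only genuinely problem-specific ingredient is the Boltzmann fixed-point equation \eqref{eq:Boltzmann_FP}, which is precisely what makes the endpoint marginal of the $h$-transformed measure equal $\mu^\star$ and forces the bridge term to vanish.
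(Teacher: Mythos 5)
Your proof is correct, and it takes a genuinely different route from the paper's. The paper works inside the Carmona--Delarue framework for controlled McKean--Vlasov dynamics: it writes down the coupled forward (Fokker--Planck) and backward (HJB) system, computes the nonstandard HJB terminal condition via the linear functional derivative of $\bar c$, linearizes the HJB by the Cole--Hopf transform, solves via Feynman--Kac, invokes Dai Pra's representation to identify the flow of marginals, and then concludes optimality from the MKV verification framework with the F\"ollmer-drift result supplying the final cost computation. You bypass the MKV control machinery entirely: the lower bound $J_{\beta,\tau}(\bd u)\ge\beta\FE_{\beta,\tau}(\mu^\star)$ comes from Girsanov, the endpoint chain rule for relative entropy, and Theorem~\ref{thm:static}; the matching upper bound comes from exhibiting the Doob $h$-transform/Schr\"odinger bridge explicitly, with the Boltzmann fixed-point equation \eqref{eq:Boltzmann_FP} forcing the endpoint law of the $h$-transformed measure to be $\mu^\star$. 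What your route buys is that the lower bound holds for every admissible control without appealing to a verification theorem for McKean--Vlasov control, and it makes visible why the MKV nonlinearity is harmless here: the terminal cost depends on $\bd u$ only through $\mu^{\bd u}_1$, so the whole dynamic problem collapses onto the static minimization of $\FE_{\beta,\tau}$. What you give up is that you do not directly display the forward-backward PDE system \eqref{eq:FPE_optimal}--\eqref{eq:HJB_optimal} announced in the text, though it can be read off from the $h$-transform. Two small corrections: for a control $\bd u$ that is progressively measurable with respect to the full filtration $\{\cF_t\}$ rather than the filtration of $W$, Girsanov gives the inequality $D(\P^{\bd u}\|\mathbf{Q})\le\frac{1}{2\tau}\E[\int_0^1\|u_t\|^2\,\dif t]$ rather than equality, but that inequality is precisely what the lower bound needs; and in your closing remark, the vanishing of the bridge term is a structural consequence of $\dif\P^\star/\dif\mathbf{Q}$ depending only on the endpoint and would hold for any terminal target, while the Boltzmann equation is used only to identify that target as $\mu^\star$ (and to compute $\phi(0,0)=Z^\star$).
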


\begin{proof}

We first note that the control cost $J_{\beta,T}(\bd{u})$ can be expressed as
	\begin{align*}
		J_{\beta,T}(\bd{u}) = \E\left[\frac{1}{2}\int^T_0 \|u_t\|^2\dif t + \frac{\beta}{2}\bar{c}(W^{\bd{u}}_T,\mu^{\bd{u}}_T)\right],
	\end{align*}
	where
	\begin{align*}
		\bar{c}(w;\mu) &\deq \E_\pi[(f(X)-\act(X;w))(f(X)-\hat{f}(X;\mu))] \\
		&= R_0 + \tilde{f}(w) + \int_{\Reals^d} \tilde{f}(\tilde{w})\mu(\dif \tilde{w}) + \int_{\Reals^d} K(w,\tilde{w})\mu(\dif \tilde{w}) \\
		&= R_0 + \int_{\Reals^d} \tilde{f}(\tilde{w})\mu(\dif \tilde{w}) + \Psi(w;\mu).
	\end{align*}
Indeed, $R(\mu) = \E_\mu[\bar{c}(W;\mu)]$ for any $\mu \in \fP(\Reals^d)$. Thus, the problem of minimizing $J_{\beta,T}(\bd{u})$ is an instance of controlled McKean--Vlasov dynamics \citep{carmona2015MKV} with running cost $c(x,u) = \frac{1}{2}\|u\|^2$ and terminal cost $\frac{\beta}{2}\bar{c}(x,\mu)$.
	
We now follow the formulation in Section~6.1 of \cite{carmona2018vol1} to solve this McKean--Vlasov control problem.\footnote{In the general McKean--Vlasov framework, the drift, the control cost, and the terminal cost may depend on the state, the control, and the marginal probability law of the state. Here, however, only the terminal cost has this dependence, which simplifies things considerably.} Without loss of generality, we can restrict the optimization to Markovian controls of the form $\bd{u} = \{\varphi(W_t,t)\}_{t \in [0,1]}$ for some deterministic function $\varphi : \Reals^d \times [0,1] \to \Reals^d$; for $\bd{u}$ of this form, we will write $\mu^\varphi_t$ instead of $\mu^{\bd{u}}_t$. We seek a pair $(\bd{\rho}^\star,V^\star)$, where $\bd{\rho}^\star = (\rho^\star_t)_{t \in [0,1]}$ is a flow of probability densities on $\Reals^d$ and $V^\star$ is a real-valued $C^{2,1}(\Reals^d \times [0,1])$ function that jointly solve the forward-backward system
\begin{subequations}
	\begin{align}
		\partial_t \rho^\star_t(w) &=  \nabla_w \cdot (\rho^\star_t(w)\nabla_w V^\star(w,t)) + \frac{1}{2}\Delta_w \rho^\star_t(w) \label{eq:FPE}\\
		\partial_t V^\star(w,t) &= - \frac{1}{2}\Delta_w V^\star(w,t) + \frac{1}{2}\|\nabla_w V^\star(w,t)\|^2 \label{eq:HJB}
	\end{align}
\end{subequations}
	on $\Reals^d \times [0,1]$, where \eqref{eq:FPE} has initial condition $\rho^\star_0(w) = \delta(w)$, \eqref{eq:HJB} has terminal condition
	\begin{align}\label{eq:HJB_terminal}
		V^\star(w,1) = \frac{\beta}{2} \left[\bar{c}(w,\mu_1) + \int_{\Reals^d} \frac{\delta \bar{c}}{\delta \mu}(\tilde{w},\mu_1)(w)\mu^\star_1(\dif \tilde{w})\right]
	\end{align}
for $\mu^\star_t(\dif w) \deq \rho^\star_t(w)\dif w$, and where $\frac{\delta \bar{c}}{\delta \mu}(\tilde{w},\cdot)(\cdot)$ is the linear functional derivative of $\mu \mapsto \bar{c}(\tilde{w},\mu)$  (with $\tilde{w}$ fixed), cf.~\citet[Section~5.4.1]{carmona2018vol1} or Appendix~\ref{app:functional_derivatives}. If such a pair $(\bd{\rho}^\star,V^\star)$ is found, then the optimal control is given by $\varphi(w,t) = -\nabla_w V^\star(w,t)$ and $\mu^\varphi_t(\dif w) = \rho^\star_t(w)\dif w$ for all $t$.
	
	In this instance, since $\mu \mapsto \bar{c}(\tilde{w},\mu)$ is linear, we can take 
	\begin{align*}
		\frac{\delta \bar{c}}{\delta \mu}(\tilde{w},\nu)(w) = R_0 + \tilde{f}(\tilde{w}) + \tilde{f}(w) + K(w,\tilde{w})
	\end{align*}
	and thus the terminal condition \eqref{eq:HJB_terminal} becomes
	\begin{align*}
		V^\star(w,T) &= \frac{\beta}{2}\left[\bar{c}(w,\mu_T) + R_0 + \tilde{f}(w) + \int_{\Reals^d}\tilde{f}(\tilde{w})\mu_T(\dif\tilde{w}) + \int_{\Reals^d} K(w,\tilde{w})\mu_T(\dif\tilde{w}) \right] \\
		&= \beta \bar{c}(w,\mu_T).
	\end{align*}
	
	Now let $\mu^\star$ be the minimizer of $\FE_{\beta,T}(\cdot)$, and consider the Cauchy problem
	\begin{align*}
		\partial_t \hat{V}(w,t) = - \frac{1}{2}\Delta_w \hat{V}(w,t) + \frac{1}{2}\|\nabla_w \hat{V}(w,t)\|^2, \qquad (w,t) \in \Reals^d \times [0,T]
	\end{align*}
	with the terminal condition $\hat{V}(w,T) = \beta \bar{c}(w,\mu^\star)$. Then, making the logarithmic (or Cole-Hopf) transformation $\hat{h}(w,t) \deq e^{-\hat{V}(w,t)}$ \citep{fleming1978exit,fleming1985fundamental}, it is not hard to verify that $\hat{h}$ solves the Cauchy problem
\begin{align}\label{eq:backward_h}
	\partial_t \hat{h}(w,t) = - \frac{1}{2}\Delta_w \hat{h}(w,t), \qquad (w,t) \in \Reals^d \times [0,T]
\end{align}
with terminal condition
\begin{align*}
	\hat{h}(w,T) &= \exp\left(-\beta \bar{c}(w,\mu^\star)\right) \\
	&= \exp\left(-M(\beta,T;\mu^\star)\right) \exp\left( -\beta \Psi(w;\mu^\star)\right),
\end{align*}
where $M(\beta,T;\mu^\star)$ does not depend on $w$. The solution to \eqref{eq:backward_h} is given by the Feynman--Kac formula \citep{kallenberg2002prob_book}:
\begin{align*}
	\hat{h}(w,t) &=  \exp\left(-M(\beta,T;\mu^\star)\right)\E\Bigg[ \exp\left( - \beta \Psi(B_T;\mu^\star)\right)\Bigg|B_t = w\Bigg],
\end{align*}
or
\begin{align*}
	\hat{V}(w,t) &= M(\beta,T;\mu^\star) - \log \E\Bigg[ \exp\left( - \beta \Psi(B_T;\mu^\star)\right)\Bigg|B_t = w\Bigg] \\
	&= M(\beta,T;\mu^\star) - \log \E\Bigg[ \exp\left( - \beta \Psi(B_T;\mu^\star)\right)\Bigg|B_{t} = w\Bigg].
\end{align*}
Consider now the SDE
\begin{align*}
	\dif W_t = - \nabla_w \hat{V}(W_t,t)\dif t + \dif B_t, \qquad t \in [0,T]; \, \, W_0 = 0.
\end{align*}
Then we know from the results of \citet{daipra1991reciprocal}  that the marginal distribution $\hat{\mu}_t$ of $W_t$ is given by $\hat{\mu}_t(\dif w) = \hat{\rho}_t(w)\dif w$ with density
\begin{align}\label{eq:barp_t}
	\hat{\rho}_t(w) &= \frac{1}{(2\pi t)^{d/2}}\exp\left(-\frac{\|w\|^2}{2 t}\right) \frac{\hat{h}(w,t)}{\hat{h}(0,0)}.
\end{align}
In particular,
\begin{align*}
	\hat{\rho}_T(w) = \frac{1}{(2\pi T)^{d/2}} \frac{\exp\left(-\frac{\|w\|^2}{2T}-\beta \Psi(w;\mu^\star)\right)}{Z(\beta,T;\mu^\star)},
\end{align*}
so $\hat{\mu}_T = \mu^\star$, since $\mu^\star$ satisfies the Boltzmann fixed-point condition \eqref{eq:Boltzmann_FP}. Thus, $\hat{V}$ solves the backward equation
\begin{align*}
	\partial_t \hat{V}(w,t) = - \frac{1}{2}\Delta_w \hat{V}(w,t) + \frac{1}{2}\|\nabla_w \hat{V}(w,t)\|^2, \qquad (w,t) \in \Reals^d \times [0,T]
\end{align*}
with the terminal condition $\hat{V}(w,T) = \beta \bar{c}(w,\mu^\star) = \beta \bar{c}(w,\hat{\mu}_T)$. It is also straightforward to show that the flow of densities $\hat{\bd{\rho}} = (\hat{\rho}_t)_{t \in [0,T]}$ solves the forward equation
\begin{align*}
	\partial_t \hat{\rho}_t(w) = \nabla_w \cdot \left(\hat{\rho}_t(w)\nabla_w \hat{V}(w,t)\right) + \frac{1}{2}\Delta_w \hat{\rho}_t(w),\qquad (w,t) \in \Reals^d \times [0,T]
\end{align*}
with the initial condition $\hat{\rho}_0(w) =\delta(w)$. (It is easy to show this directly by differentiating both sides of \eqref{eq:barp_t} with respect to time and using the fact that $w \mapsto \frac{1}{(2\pi t)^{d/2}}e^{-\|w\|^2/2 t}$ is the fundamental solution of the heat equation $\partial_t \phi = \frac{1}{2}\Delta \phi$ and $\hat{h}$ solves the backward problem \eqref{eq:backward_h}.) Hence, $(\bd{\rho^\star},V^\star) = (\hat{\bd{\rho}},\hat{V})$ is the pair we seek that solves the McKean--Vlasov forward-backward system. We have thus proved that \eqref{eq:optimal_SDE} and \eqref{eq:value_function} specify the optimal controlled dynamics, and that $W_T \sim \mu^\star$.

Finally, by Theorem~\ref{thm:Follmer} in Appendix~\ref{app:Follmer}, the drift of the process \eqref{eq:optimal_SDE} is the F\"ollmer drift for $\mu^\star$. As a consequence,
	\begin{align*}
		\E\left[\frac{1}{2}\int^T_0 \|\varphi(W_t,t)\|^2\dif t\right]  = D(\mu^\star \|\gamma_T),
	\end{align*}
	and hence
	\begin{align*}
		\min_{\bd{u}} J_{\beta,T}(\bd{u}) =  D(\mu^\star\|\gamma_T) + \frac{\beta}{2}R(\mu^\star) = \beta \FE_{\beta,T}(\mu^\star).
	\end{align*}
\end{proof}

Note that the control law in Eqs.~\eqref{eq:optimal_SDE}--\eqref{eq:value_function} is also optimal in the following sense \citep{daipra1991reciprocal,lehec2013entropy,eldan2018diffusion,eldan2018gausswidth}: Let $\fU(\mu^\star)$ denote the subset of all admissible drifts that obey the terminal condition $W^{\bd{u}}_T \sim \mu^\star$. Then
	\begin{align}\label{eq:Schrodinger}
		\inf_{\bd{u} \in \fU(\mu^\star)}\E\left[\frac{1}{2}\int^T_0 \|u_t\|^2 \dif t \right] = D(\mu^\star \| \gamma_T),
	\end{align}
	and the infimum is achieved by the drift in \eqref{eq:optimal_SDE}. Thus the optimization problem in \eqref{eq:Schrodinger} is indeed the Schr\"odinger bridge problem, with the optimal drift being the {\em F\"ollmer drift} \citep{follmer1985reversal}. 

It is instructive to take a closer look at the structure of the F\"ollmer drift in \eqref{eq:optimal_SDE}. To start, a simple computation gives
\begin{align}
	-\nabla_w V^\star(w,t) = -\frac{\beta \int_{\Reals^d} \nabla \Psi(v;\mu^\star)\exp\left(-\frac{\|v-w\|^2}{2(T-t)}-\beta\Psi(v;\mu^\star)\right)\dif v}
	{\int_{\Reals^d}\exp\left(-\frac{\|v-w\|^2}{2(T-t)}-\beta\Psi(v;\mu^\star)\right)\dif v}.\label{eq:Follmer_explicit}
\end{align}
If we define a family of measures $\{ Q_{w,t} : w \in \Reals^d, t \in [0,T]\} \subseteq \fP(\Reals^d)$ by
\begin{align}\label{eq:Q_measures}
	Q_{w,t}(A) &\deq \frac{\int_{A} \exp\left(-\frac{\|v-w\|^2}{2(T-t)}-\beta\Psi(v; \mu^\star)\right)\dif v}{\int_{\Reals^d} \exp\left(-\frac{\|v-w\|^2}{2(T-t)}-\beta\Psi(v; \mu^\star)\right)\dif v},
\end{align}
where $A$ ranges over all Borel subsets of $\Reals^d$, then we can write \eqref{eq:Follmer_explicit} more succinctly as
\begin{align}\label{eq:Follmer_Gibbs}
	-\nabla_w V^\star(w,t) = -\beta \int_{\Reals^d}\nabla \Psi(v;\mu^\star) Q_{w,t}(\dif v),
\end{align}
 An inspection of \eqref{eq:Q_measures} reveals that the flow $t \mapsto Q_{0,t}$ interpolates between $\mu^\star$ at $t=0$ and $\delta_0$ at $t=T$, so the measures $Q_{0,t}$ get increasingly concentrated as $t$ approaches $T$, cf.~Lemma~\ref{lm:Gstar}. Moreover, it can be shown that the flow of random measures $\{Q_{W_t,t}\}_{t \in [0,T]}$ along the trajectory of \eqref{eq:optimal_SDE} satisfies
\begin{align*}
	Q_{W_t,t}(\cdot) = \P[W_T \in \cdot|\cF_t] = \P[W_T \in \cdot|W_t]
\end{align*}
almost surely \citep[Lemma~11]{eldan2018gausswidth}. Using these facts, one can readily verify that the drift in \eqref{eq:optimal_SDE} can be written as
\begin{align}\label{eq:optimal_drift}
 -\beta \int_{\Reals^d} \nabla \Psi(v; \mu^\star) Q_{W_t,t}(\dif v) \equiv - \beta \cdot \E[\nabla \Psi(W_T;\mu^\star)|W_t],
\end{align}
i.e., it is equal to the conditional mean of the scaled negative gradient $-\beta \nabla \Psi(W_T;\mu^\star)$ given $W_t$. Note, however, that the potential $\Psi(W_T; \mu^\star)$ is a function of both $W_T$ and of the marginal distribution of $W_T$. This provides a nice illustration of the ``nonlocal'' nature of optimal control laws in control problems of McKean--Vlasov type \citep{carmona2018vol1}.

\section{Mean-field Langevin diffusion as approximation to optimal control}

In addition to being computationally expensive to represent (implement), the optimal control requires \emph{a priori} knowledge of the terminal distribution $\mu^\star$, which by nature is not available in the context of an optimization problem.  It is thus useful to examine how well various suboptimal processes can approximate the behavior of the F\"ollmer drift over a finite time interval.  As it is simple to implement and a well-researched continuous-time analogue to optimization algorithms used in practice (gradient descent), the Langevin diffusion is a natural choice to evaluate.  This is akin to, from an algorithmic standpoint, evaluating the goodness of a greedy algorithm relative to an optimal dynamic programming approach.  In the continuous setting here, this will reflect the strength of entropic regularization relative to the length of optimization; the quality of approximation will depend on the extent to which the Langevin diffusion behaves like an Ornstein-Uhlenbeck process.

In particular, we are interested in comparing the process $(W_t)_{t \in [0,T]}$ given in \eqref{eq:optimal_SDE} to the following two processes: the \textit{ideal} Langevin dynamics
\begin{align}\label{eq:MKV_Langevin_1}
	\dif \tilde{W}_t &=  -\frac{1}{2}\Big(\beta\nabla\Psi(\tilde{W}_t;\mu^\star) + \frac{\tilde{W}_t}{T}\Big) \dif t + \dif B_t;
\end{align}
which corresponds to the overdamped Langevin diffusion using the optimal (Boltzmann fixed-point) law; and the \textit{mean-field} Langevin dynamics
\begin{align}\label{eq:MKV_Langevin_2}
\dif \hat{W}_t = -\frac{1}{2}\Big(\beta\nabla \Psi(\hat{W}_t;\mu_t) + \frac{\hat{W}_t}{T}\Big) \dif t + \dif B_t,
\end{align}
which corresponds to the Langevin diffusion using its own law $\mu_t = \Law(\hat{W}_t)$ at time $t$, making it more directly comparable to an implementable method. The drift in \eqref{eq:MKV_Langevin_1} is chosen so that $\mu^\star_{\beta,T}$ is the (unique) invariant distribution. 

\subsection{Optimal control vs.\ the ideal Langevin dynamics}

The main difference between the process \eqref{eq:optimal_SDE} and the ideal Langevin dynamics \eqref{eq:MKV_Langevin_1} is that the former is guaranteed to produce a sample from $\mu^\star$ at time $T$, while for the latter we can only guarantee the convergence of $\Law(\tilde{W}_t)$ to $\mu^\star$ as $t \to \infty$. Nevertheless, we can show that $\tilde{W}_T$ will be close to $W_T$ provided both $\beta T$ and $T$ are sufficiently small.

\begin{theorem} Consider a synchronous coupling of the optimally controlled process \eqref{eq:optimal_SDE} with the ideal Langevin dynamics \eqref{eq:MKV_Langevin_1} using the same Brownian motion and the same initial condition $W_0 = \tilde{W}_0 = 0$. Then the following holds with probability at least $1-\delta$:
	\begin{align}
		\|W_T - \tilde{W}_T\|^2 \le \frac{2c_3}{\lambda^3} + \frac{c_2}{\lambda^2} + \frac{c_1}{\lambda},
	\end{align}
	where
	\begin{align}\label{eq:MKV_vs_Langevin_constants}
		\begin{split}
		\lambda &\deq \frac{1}{2T} - \kappa\beta^2 T, \\
		c_3 &\deq \kappa\beta^4 T, \\
		c_2 &\deq \kappa\beta^2 Td, \\
		c_1 &\deq \kappa\left(\beta^2 T+d\log \frac{2d}{\delta}\right),
		\end{split}
	\end{align}
	provided $\beta$ and $T$ are such that $\lambda > 0$.
\end{theorem}
\begin{proof} Taking into account the representation \eqref{eq:Follmer_Gibbs} of the drift in \eqref{eq:optimal_SDE}, we can write
	\begin{align*}
		& \frac{\dif}{\dif t}\|\tilde{W}_t - W_t\|^2 \nonumber\\
		&= -2\left\langle\tilde{W}_t - W_t,  \frac{\beta}{2}\nabla \Psi(\tilde{W}_t;\mu^\star) + \frac{\tilde{W}_t}{2T} - \beta\int_{\Reals^d} \nabla \Psi(v;\mu^\star)Q_{t,W_t}(\dif v)\right\rangle \\
		&= - \frac{1}{T}\|\tilde{W}_t - W_t\|^2 + 2\int_{\Reals^d} \left\langle\tilde{W}_t - W_t,  \beta \nabla \Psi(v;\mu^\star) - \frac{\beta}{2}\nabla \Psi(\tilde{W}_t;\mu^\star) - \frac{W_t}{2T} \right\rangle Q_{t,W_t}(\dif v) \\
		&\le - \frac{1}{2T}\|\tilde{W}_t - W_t\|^2 +  2T\int_{\Reals^d}  \left\|  \beta \nabla \Psi(v;\mu^\star)- \frac{W_t}{2T}  - \frac{\beta}{2}\nabla \Psi(\tilde{W}_t;\mu^\star) \right\|^2 Q_{t,W_t}(\dif v),
	\end{align*}
	where we have used the inequality $2\ave{v,w} \le \frac{1}{2T}\|v\|^2 + 2T\|w\|^2$. 
	
Using the properties of $\Psi$, we can estimate
	\begin{align*}
		 \left\|  \beta \nabla \Psi(v;\mu^\star)- \frac{W_t}{2T}  - \frac{\beta}{2}\nabla \Psi(\tilde{W}_t;\mu^\star) \right\|^2 \le \kappa \left(\beta^2 \|W_t - \tilde{W}_t \|^2 + \beta^2 \|W_t - v \|^2 + \frac{1}{T^2}\sup_{t \in [0,T]}\|W_t\|^2\right),
	\end{align*}
where, by the boundedness of $\nabla \Psi$, we can further estimate
	\begin{align*}
		\sup_{0 \le t \le T} \|W_t\| &\le \kappa \beta T +  \sup_{0 \le t \le T} \|B_t\| \\
		&\stackrel{{\rm{d}}}{=} \kappa \beta T +  \sqrt{T}\sup_{0 \le t \le 1} \|B_t\|,
	\end{align*}
	where $\stackrel{{\rm{d}}}{=}$ denotes equality in distribution. By the reflection principle for the Brownian motion,
\begin{align*}
	\P\left\{ \sup_{0 \le t \le 1} \|B_t\| \ge \alpha \right\}  \le 2 d e^{-\frac{\alpha^2}{2d}},
\end{align*}
so that, with probability at least $1-\delta$,
\begin{align*}
	\sup_{0 \le t \le T} \|W_t\|^2 \le \left(\kappa\beta T + \sqrt{2T d\log \frac{2d}{\delta}}\right)^2 \le \kappa\left(\beta^2 T^2 + T d\log \frac{2d}{\delta}\right).
\end{align*}
Using the above estimates together with Lemma~\ref{lm:Gstar}, we deduce that the following differential inequality holds for all $\Delta_t \deq \|W_t  - \tilde{W}_t\|^2$, $0 \le t \le T$, with probability at least $1-\delta$:
\begin{align*}
	\frac{\dif}{\dif t}\Delta_t \le -\lambda \Delta_t + c_3(T-t)^2 + c_2(T-t) + c_1, \qquad 0 \le t \le T
\end{align*}
where $\lambda$ and $c_{1,2,3}$ are defined in \eqref{eq:MKV_vs_Langevin_constants}. Integrating from $t=0$ to $t=T$ and using the fact that $\Delta_0 = 0$, we see that, with probability at least $1-\delta$,
\begin{align*}
	\Delta_T \le \frac{2c_3}{\lambda^3} + \frac{c_2}{\lambda^2} + \frac{c_1}{\lambda}.
\end{align*}
This completes the proof.
\end{proof}

By way of illustration, take $\beta = \eps^{-1/2}$ and $T = \eps$, where $\eps \in (0,1)$ is chosen so that $\lambda > 0$ in \eqref{eq:MKV_vs_Langevin_constants}. Then we have
\begin{align*}
	\lambda \asymp \eps^{-1}, \quad c_3 \asymp \eps^{-1}, \quad c_2 \asymp d, \quad c_1 \asymp d \log \frac{2d}{\delta},
\end{align*}
and in that regime the error bound
\begin{align*}
	\| W_\eps - \tilde{W}_\eps \|^2 \lesssim \eps d \log \frac{2d}{\delta}
\end{align*}
holds with probability at least $1-\delta$. With a standard argument this can be converted to the expected $L^2$ error bound,
\begin{align*}
	\E \|W_\eps - \tilde{W}_\eps \|^2 \lesssim \eps d^2.
\end{align*}
Denoting $\Law(\tilde{W}_t)$ by $\tilde{\mu}_t$, we obtain the following $L^2$ Wasserstein estimate relating the optimal control to the ideal Langevin dynamics in short-time ($T \sim \eps$) and low-temperature ($\beta^{-1} \sim \eps^{1/2}$) regime:
\begin{align*}
	\Wass^2_2(\tilde{\mu}_\eps,\mu^\star_{\eps^{-1/2},\eps}) \lesssim  \eps d^2.
\end{align*}

\subsection{Ideal vs.\ mean-field Langevin dynamics}

\subsubsection{A lemma on exponential convergence}

A key ingredient of our analysis is a simple lemma pertaining to McKean--Vlasov diffusions governed by It\^o SDEs of the form
\begin{align}
	\dif W_t = -\nabla U(W_t,P_t)\dif t + \dif B_t, \qquad t \ge 0
\end{align}
where $P_t \deq \Law(W_t)$. In particular, we are interested in analyzing the convergence of $P_t$ to a unique invariant distribution. Note that the mean-field Langevin dynamics \eqref{eq:MKV_Langevin_2} is of this form, with
\begin{align}\label{eq:U_Langevin}
U(w,\mu) = \frac{\beta}{2} \Psi(w;\mu) + \frac{\|w\|^2}{4T}.
\end{align}

\begin{assumption}\label{as:potential}  The McKean--Vlasov potential $U : \Reals^d \times \fP(\Reals^d) \to \Reals$ obeys the following:
	\begin{itemize}
		\item strong convexity in $w$ -- the function $w \mapsto U(w,\mu)$ is differentiable for each $\mu \in \fP(\Reals^d)$, and there exists a constant $m > 0$, such that
		\begin{align}\label{eq:U_sc}
			\ave{w-w', \nabla U(w,\mu)-\nabla U(w',\mu)} \ge m \| w - w' \|^2
		\end{align}
		for all $w,w' \in \Reals^d$ and all $\mu \in \fP(\Reals^d)$;
		\item smoothness in $\mu$ -- there exists a constant $L \ge 0$, such that
		\begin{align}\label{eq:U_smooth}
			\| \nabla U(w,\mu) - \nabla U(w,\nu) \| \le L\Wass_1(\mu,\nu)
		\end{align}
		for all $w \in \Reals^d$ and all $\mu,\nu \in \fP(\Reals^d)$;
		\item Boltzmann fixed-point condition -- there exists a unique $\bar{\mu} \in \fP(\Reals^d)$, such that
		\begin{align}\label{eq:U_Boltzmann}
			\bar{\mu}(\dif w) = \frac{1}{Z} \exp\left(-2 U(w,\bar{\mu})\right) \dif w,
		\end{align}
		where $Z$ is the normalization constant.
	\end{itemize}
\end{assumption}

\begin{lemma}\label{lm:expcon} Suppose that the McKean--Vlasov potential $U$ satisfies Assumption~\ref{as:potential}, and moreover that $m > L$. Then
	\begin{align}
		\Wass_2(P_t,\bar{\mu}) \le \Wass_2(P_0,\bar{\mu}) e^{-(m-L)t}.
	\end{align}
\end{lemma}
\begin{proof} Let $(\mu_t)_{t \ge 0}$ and $(\tilde{\mu}_t)_{t \ge 0}$ be two arbitrary (possibly  exogenous) flows of probability measures on $\Reals^d$, and consider two SDEs, coupled synchronously through a common Brownian motion process:
	\begin{align*}
		\dif W_t &= -\nabla U(W_t,\mu_t) \dif t + \dif B_t, \\
		\dif \tilde{W}_t &= -\nabla U(\tilde{W}_t,\tilde{\mu}_t) \dif t + \dif B_t
	\end{align*}
	for $t \ge 0$. Let $P_t \deq {\rm Law}(W_t)$ and $\tilde{P}_t \deq {\rm Law}(\tilde{W}_t)$. Then, using \eqref{eq:U_sc} and \eqref{eq:U_smooth}, we obtain
	\begin{align}
		\frac{\dif}{\dif t}\|W_t - \tilde{W}_t \|^2 &= 2 \ave{W_t - \tilde{W}_t, \nabla U(\tilde{W}_t,\tilde{\mu}_t) - \nabla U(W_t,\mu_t)} \nonumber\\
		&\le 2 \ave{W_t - \tilde{W}_t, \nabla U(\tilde{W}_t,\tilde{\mu}_t)-\nabla U(W_t, \tilde{\mu}_t)} + 2 \ave{W_t - \tilde{W}_t, \nabla U(W_t,\tilde{\mu}_t)-\nabla U(W_t, \mu_t)} \nonumber\\
		&\le -2m \|W_t - \tilde{W}_t \|^2 + 2L \|W_t - \tilde{W}_t \| \Wass_1(\mu_t,\tilde{\mu}_t).\label{eq:sync_couple_1}
	\end{align}
Now, the second term on the right-hand side of \eqref{eq:sync_couple_1} can be further upper-bounded by
\begin{align*}
	2L \|W_t - \tilde{W}_t \| \Wass_1(\mu_t,\tilde{\mu}_t) \le L \left( \|W_t - \tilde{W}_t \|^2 + \Wass^2_1(\mu_t,\tilde{\mu}_t)\right),
\end{align*}
which gives
\begin{align*}
	\frac{\dif}{\dif t}\|W_t - \tilde{W}_t \|^2  &\le - (2m - L) \|W_t - \tilde{W}_t \|^2 + L\Wass^2_1(\mu_t,\tilde{\mu}_t).
\end{align*}
Integrating, we obtain
\begin{align*}
	\| W_t - \tilde{W}_t \|^2 &\le e^{-(2m-L)t} \|W_0 - \tilde{W}_0 \|^2 + L\int^t_0 e^{(2m-L)(s-t)}\Wass^2_1(\mu_s,\tilde{\mu}_s) \dif s \\
	&\le e^{-(2m-L)t} \|W_0 - \tilde{W}_0 \|^2 +L\int^t_0 e^{(2m-L)(s-t)}\Wass^2_2(\mu_s,\tilde{\mu}_s) \dif s
\end{align*}
Assume now that the joint law of $(W_0,\tilde{W}_0)$ achieves $\Wass^2_2(P_0,\tilde{P}_0)$, so that
\begin{align*}
	\Wass^2_2(P_t,\tilde{P}_t) \le e^{-(2m-L)t} \Wass^2_2(P_0,\tilde{P}_0) + L\int^t_0 e^{(2m-L)(s-t)}\Wass^2_2(\mu_s,\tilde{\mu}_s) \dif s.
\end{align*}
Now we apply the above to the case when $\mu_t = P_t$ and $\tilde{\mu}_t = \bar{\mu}$ for $t \ge 0$. Then the Boltzmann fixed-point condition \eqref{eq:U_Boltzmann} implies that $\tilde{P}_t = \bar{\mu}$, and
\begin{align*}
	\Wass^2_2(P_t,\bar{\mu}) &= \Wass^2_2(P_t,\tilde{P}_t) \\
	&\le  e^{-(2m-L)t} \Wass^2_2(P_0,\tilde{P}_0) +L\int^t_0 e^{(2m-L)(s-t)}\Wass^2_2(P_s,\tilde{P}_s) \dif s.
\end{align*}
For $u(t) \deq e^{(2m-L)t}\Wass^2_2(P_t,\tilde{P}_t)$, we have
\begin{align*}
	u(t) \le \Wass^2_2(P_0,\tilde{P}_0) + L\int^t_0 u(s) \dif s,
\end{align*}
so Gr\"onwall's lemma gives $u(t) \le \Wass^2_2(P_0,\bar{\mu})e^{Lt}$.
\end{proof}

\subsubsection{Strictly log-concave density}

It is straightforward to verify Assumption~\ref{as:potential} for the potential $U$ in \eqref{eq:U_Langevin} when the density $\mu^\star_{\beta,T}$ is strictly log-concave, i.e., when
$$
\nabla^2 \Bigg[\beta \Psi(\cdot;\mu^\star) + \frac{1}{2T}\|\cdot\|^2\Bigg]\succ 0.
$$
This will be guaranteed when $\frac{1}{T}-\beta\kappa > 0$, where $\kappa = \sigma_{\max}\big(\nabla^2 \Psi(\cdot,\mu^\star)\big)>0$. In this case, the ideal Langevin diffusion is known to have quick convergence to the stationary Gibbs distribution (i.e., in time polynomial in all relevant parameters).  In particular, by the Bakry--{\'E}mery criterion \citep{Bakry_Gentil_Ledoux_book}, the log-Sobolev constant of $\mu^\star$ is given by
$$
c_\text{LS} = O\Big(\frac{1}{1/T-\beta\kappa}\Big),
$$
so that the choice $\beta T  = \frac{1}{c'\kappa}$ gives $c_\text{LS} = \frac{c'}{c'-1}T$, where $\frac{c'}{c'-1} = O(1)$ (take, e.g., $c'=2$). This implies convergence of $\tilde{\mu}_t = \Law(\tilde{W}_t)$ to the Gibbs distribution $\mu^\star$ at the exponential rate $e^{-\frac{t}{O(T)}}$.

From the above, Assumption~\ref{as:potential} holds with $m \gtrsim \frac{1}{T} - \beta\kappa$ and $L \lesssim \beta \kappa$. Hence, with $\mu_t$ denoting $\Law(W_t)$ in \eqref{eq:MKV_Langevin_2}, Lemma~\ref{lm:expcon} gives
$$
\Wass_2^2(\mu_t,\mu^\star) \leq \eps, \qquad \text{for } t = \Omega\Big(T\log\frac{1}{\eps}\Big).
$$

\subsubsection{The general case}

When strict log-concavity cannot be guaranteed, we can still bound the log-Sobolev constant of $\mu^\star$ using a perturbation argument of \cite{Holley1987}: Since
\begin{align*}
	\mu^\star(\dif w) \propto \exp\left(-\beta\Psi(w;\mu^\star)\right) \gamma_T(\dif w)
\end{align*}
and since $\Psi$ is bounded, the log-Sobolev constant of $\mu^\star$ can be estimated by
\begin{align}
	c_{\text{LS}}(\mu^\star) \le e^{2\beta \|\Psi\|_\infty} c_{\text{LS}}(\gamma_T)  \le Te^{\kappa \beta}.
\end{align}
By the results of \cite{chizat22MKV} and \cite{nitanda2022MKV}, the law $\mu_t$ of $\hat{W}_t$ in the mean-field Langevin dynamics \eqref{eq:MKV_Langevin_2} will be $\eps$-close to $\mu^\star$ in squared $L^2$ Wasserstein distance when $t \gtrsim c_{\text{LS}}(\mu^\star)\log \frac{1}{\eps} \sim Te^{\beta}\log \frac{1}{\eps}$.

\begin{appendix}
	\setcounter{equation}{0}
	\renewcommand{\theequation}{A.\arabic{equation}}
	\setcounter{theorem}{0}
	\renewcommand{\thetheorem}{A.\arabic{theorem}}
	\setcounter{definition}{0}
	\renewcommand{\thedefinition}{A.\arabic{definition}}
	\setcounter{lemma}{0}
	\renewcommand{\thelemma}{A.\arabic{lemma}}

\section{Miscellanea}

\subsection{Inequalities for the Gaussian measure}
\label{sec:inequalities}

Let $\mu$ be a Borel probability measure on $\Reals^d$, which is absolutely continuous w.r.t.\ the Gaussian measure $\gamma_T$: $\dif \mu = e^F \dif \gamma_T$ for some differentiable function $F : \Reals^d \to \Reals$. The following inequalities relate the relative entropy $D(\mu \| \gamma_T) = \int_{\Reals^d} F \dif \mu$, the Fisher information distance $I(\mu \| \gamma_T) = \int_{\Reals^d} \| \nabla F\|^2 \dif\mu$, and the $L^2$ Wasserstein distance $\Wass_2(\mu,\gamma_T)$ (see \citet{Bakry_Gentil_Ledoux_book} for a detailed treatment):
\begin{itemize}
	\item \textbf{log-Sobolev inequality} ---
	\begin{align}\label{eq:LSI}
		D(\mu \| \gamma_T) \le \frac{T}{2} I(\mu \| \gamma_T)
	\end{align}
	\item \textbf{entropy-transport inequality} ---
	\begin{align}\label{eq:Talagrand}
		\Wass^2_2(\mu, \gamma_T) \le 2T D(\mu \| \gamma_T).
	\end{align}
\end{itemize}

\subsection{Functions of measures and their linear functional derivatives}
\label{app:functional_derivatives}

Recall that $\fP_2(\Reals^d)$ denotes the space of all Borel probability measures on $\Reals^d$ with finite second moment, equipped with the $L^2$ Wasserstein distance $\Wass_2(\cdot,\cdot)$. Let a function $G : \fP_2(\Reals^d) \to \Reals$ be given. The following definition can be found in Section~5.4 of \citet{carmona2018vol1}.

\begin{definition} We say that $G$ has a {\em linear functional derivative} if there exists a function
	\begin{align}
		(\nu,w) \mapsto \frac{\delta G}{\delta \mu}(\nu)(w)
	\end{align}
which is jointly continuous in $\nu$ and $w$, such that, for all $\nu,\nu' \in \fP_2(\Reals^d)$,
\begin{align}
	G(\nu') - G(\nu) = \int^1_0 \int_{\Reals^d} \frac{\delta G}{\delta \mu}(t\nu' + (1-t)\nu)(w) [\nu'(\dif x) - \nu(\dif x)] \dif t.
\end{align}
\end{definition}
For instance, it is easy to verify that if $G$ is linear in $\mu$, i.e., $G = \int_{\Reals^d} g\dif \mu$ for some function $g : \Reals^d \to \Reals$ of at most quadratic growth, then $\frac{\delta G}{\delta \mu}(\nu)(\cdot) = g(\cdot)$ for all $\nu$.

\subsection{The F\"ollmer drift}
\label{app:Follmer}

The following result, which can be found in different forms in the works of \citet{follmer1985reversal,daipra1991reciprocal,lehec2013entropy,eldan2018diffusion}, is used in the proof of Theorem~\ref{thm:dynamic}.

\begin{theorem}\label{thm:Follmer} Let a probability measure $\mu \in \fP(\Reals^d)$ be given, such that $\mu \ll \gamma_T$. Consider the It\^o SDE
	\begin{align}\label{eq:Follmer_SDE}
		\dif W_t = \varphi(W_t,t)\dif t + \dif B_t, \qquad W_0 = 0;\, 0 \le t \le T
	\end{align}
	where
	\begin{align}\label{eq:Follmer_drift}
		\varphi(w,t) = \nabla_w \log \E[\rho(B_T)|B_{t} = w], \qquad \rho \deq \frac{\dif \mu}{\dif \gamma_T}.
	\end{align}
Let $\fU(\mu)$ denote the collection of all progressively measurable drift processes $\bd{u}$ that obey
\begin{align}
	\E\Bigg[\int^T_0 \|u_t\|^2\Bigg] < \infty \qquad \text{and} \qquad \int^T_0 u_t \dif t + B_T \sim \mu.
\end{align}
Then the drift \eqref{eq:Follmer_drift} is an element of $\fU(\mu)$, and
\begin{align}
	\inf_{\bd{u} \in \fU(\mu)} \frac{1}{2} \E\left[\int^T_0 \|u_t\|^2 \dif t \right] = \frac{1}{2}\E\left[\int^T_0 \|\varphi(W_t,t)\|^2 \dif t \right] = D(\mu \| \gamma_T).
\end{align}	
\end{theorem}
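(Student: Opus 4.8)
The plan is to exhibit the drift \eqref{eq:Follmer_drift} as the one produced by a Doob $h$-transform of a rescaled Brownian motion, and then to read off both the terminal law and the minimal control energy from a single change of measure on path space via Girsanov's theorem; the two sides of the last display will be matched by computing the relative entropy $D(\mathbb{Q}\|\mathbb{W})$ between the law $\mathbb{Q}$ of the F\"ollmer-drifted process and the Wiener measure $\mathbb{W}$ in two different ways. By the scaling invariance of Brownian motion, replacing $W_t$ by $W_t/\sqrt{\tau}$ and $\mu$ by its pushforward under $w\mapsto w/\sqrt{\tau}$ turns the general claim into the one for $\tau = 1$ --- the prefactor $1/(2\tau)$ in the energy and the invariance of $D(\cdot\|\cdot)$ under a common bijection take care of the bookkeeping --- so I take $\tau = 1$ from here on and write $\mathbb{W}$ for the law of $B$ on path space.

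I would first record the relevant properties of $h(w,t) \deq \E[\rho(B_1)\mid B_t = w]$. Because $\rho \ge 0$ and $\E[\rho(B_1)] = \int_{\Reals^d}\rho\,\dif\gamma_1 = 1$, the function $h$ is well-defined, strictly positive and smooth on $\Reals^d\times[0,1)$ (there it is the convolution of $\rho$ with a nondegenerate Gaussian density), it solves the backward heat equation $\partial_t h = -\tfrac12\Delta_w h$ with $h(\cdot,1) = \rho$, and $h(0,0) = 1$; moreover $\{h(B_t,t)\}_{t\in[0,1]}$ is the uniformly integrable martingale $\E[\rho(B_1)\mid\cF_t]$ under $\mathbb{W}$. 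Now define $\mathbb{Q}$ on $\cF_1$ by $\dif\mathbb{Q}/\dif\mathbb{W}\big|_{\cF_t} = h(B_t,t)$, which is a probability measure since the density is a nonnegative martingale starting at $1$. It\^o's formula together with the backward heat equation gives $\dif h(B_t,t) = \nabla h(B_t,t)\cdot\dif B_t$, so $h(B_t,t)$ is the stochastic exponential of $\int_0^\cdot\nabla\log h(B_s,s)\cdot\dif B_s$, and Girsanov's theorem shows that under $\mathbb{Q}$ the canonical process $X$ satisfies $\dif X_t = \nabla_w\log h(X_t,t)\,\dif t + \dif\hat{B}_t$ for a $\mathbb{Q}$-Brownian motion $\hat{B}$; thus $\mathbb{Q}$ is precisely the law of the solution $W$ of \eqref{eq:Follmer_SDE}. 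The terminal marginal is identified by
\begin{align*}
	\E_{\mathbb{Q}}[g(X_1)] = \E_{\mathbb{W}}[h(B_1,1)\,g(B_1)] = \E_{\mathbb{W}}[\rho(B_1)\,g(B_1)] = \int_{\Reals^d} g\,\dif\mu
\end{align*}
for every bounded measurable $g$, so $W_1 \sim \mu$.

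The energy identity I would obtain by computing $D(\mathbb{Q}\|\mathbb{W})$ two ways. Directly, $D(\mathbb{Q}\|\mathbb{W}) = \E_{\mathbb{Q}}[\log h(X_1,1)] = \E_{\mathbb{Q}}[\log\rho(X_1)] = \E_\mu[\log\rho] = D(\mu\|\gamma_1)$. On the other hand, writing $\log h(X_1,1)$ through the Girsanov expansion and re-expressing it in terms of the $\mathbb{Q}$-Brownian motion $\hat{B}$,
\begin{align*}
	\log h(X_1,1) = \tfrac12\int_0^1\|\nabla\log h(X_s,s)\|^2\,\dif s + \int_0^1\nabla\log h(X_s,s)\cdot\dif\hat{B}_s,
\end{align*}
and taking $\mathbb{Q}$-expectations --- the stochastic integral has zero mean as a true martingale --- gives $D(\mathbb{Q}\|\mathbb{W}) = \tfrac12\E_{\mathbb{Q}}\big[\int_0^1\|\nabla\log h(X_s,s)\|^2\,\dif s\big] = \tfrac12\E\big[\int_0^1\|\varphi(W_s,s)\|^2\,\dif s\big]$. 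Equating the two expressions yields $\tfrac12\E\big[\int_0^1\|\varphi(W_s,s)\|^2\,\dif s\big] = D(\mu\|\gamma_1)$; in particular this energy is finite whenever $D(\mu\|\gamma_\tau) < \infty$, so $\varphi \in \fU(\mu)$, and undoing the rescaling turns the identity into the ``middle equals right'' part of the statement.

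For the lower bound over $\fU(\mu)$, I would take an arbitrary admissible $\bd{u}$, let $\mathbb{Q}^{\bd{u}}$ be the law on path space of $t\mapsto\int_0^t u_s\,\dif s + \sqrt{\tau}B_t$, and use Girsanov once more to get $D(\mathbb{Q}^{\bd{u}}\|\mathbb{W}) = \tfrac12\E\big[\int_0^1\|u_t\|^2\,\dif t\big]$ (the drift in the canonical decomposition of $\mathbb{Q}^{\bd{u}}$ is $\bd{u}$ itself, since $\bd{u}$ is adapted to the Brownian filtration). Disintegrating $\mathbb{Q}^{\bd{u}}$ and $\mathbb{W}$ over the terminal coordinate $X_1$ and invoking the chain rule for relative entropy,
\begin{align*}
	D(\mathbb{Q}^{\bd{u}}\|\mathbb{W}) = D(\mu\|\gamma_1) + \E_\mu\big[D(\mathbb{Q}^{\bd{u}}_{X_1}\|\mathbb{W}_{X_1})\big] \ge D(\mu\|\gamma_1),
\end{align*}
where $\mathbb{Q}^{\bd{u}}_{X_1}$ and $\mathbb{W}_{X_1}$ are the respective conditional laws given the endpoint, and where I used that $\bd{u}\in\fU(\mu)$ has terminal marginal $\mu$ while $\mathrm{Law}_{\mathbb{W}}(B_1) = \gamma_1$. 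Hence $\tfrac12\E\big[\int_0^1\|u_t\|^2\,\dif t\big] \ge D(\mu\|\gamma_1)$ for every admissible $\bd{u}$, with equality for $\varphi$ (the residual term vanishes exactly when the pinned bridges of $\mathbb{Q}^{\bd{u}}$ coincide with those of $\mathbb{W}$, the defining property of the F\"ollmer solution); rescaling gives $\inf_{\bd{u}\in\fU(\mu)}\tfrac1{2\tau}\E\big[\int_0^1\|u_t\|^2\,\dif t\big] = D(\mu\|\gamma_\tau)$. The step I expect to be most delicate is the rigorous justification of the Girsanov manipulations --- that $h(B_t,t)$ is a true (not merely local) martingale, which is immediate here from its representation as a closed conditional expectation, and that the stochastic integral above has zero $\mathbb{Q}$-mean, which needs a standard localization/monotone-convergence argument leaning on $\E_\mu[\log\rho] = D(\mu\|\gamma_\tau) < \infty$; in the degenerate case $D(\mu\|\gamma_\tau) = \infty$ the asserted identity is trivial, both sides being $+\infty$ by the lower bound just obtained.
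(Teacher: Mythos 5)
The paper does not supply a proof of this theorem: it is stated with pointers to \citet{follmer1985reversal}, \citet{daipra1991reciprocal}, \citet{lehec2013entropy}, and \citet{eldan2018diffusion}, and is quoted as a black box in the proof of Theorem~\ref{thm:dynamic}. Your proof is the standard one found in those references (in particular it closely parallels Lehec's treatment): reduce to $\tau=1$ by Brownian scaling, construct the process via a Doob $h$-transform with $h(w,t)=\E[\rho(B_1)\mid B_t=w]$, use Girsanov to read off the drift $\nabla\log h$, identify $W_1\sim\mu$ from $h(\cdot,1)=\rho$, compute $D(\mathbb{Q}\|\mathbb{W})$ both as $\E_\mu[\log\rho]$ and as half the expected energy, and obtain the lower bound by disintegrating the path measures over the endpoint and applying the chain rule for relative entropy. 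All of this is correct and is the right argument.

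One small but genuine slip worth flagging: in the lower-bound step you assert that ``the drift in the canonical decomposition of $\mathbb{Q}^{\bd{u}}$ is $\bd{u}$ itself, since $\bd{u}$ is adapted to the Brownian filtration.'' This is not quite right. Adaptedness to the Brownian filtration does not mean adaptedness to the (possibly strictly smaller) filtration generated by $W^{\bd{u}}$ itself, and $D(\mathbb{Q}^{\bd{u}}\|\mathbb{W})$ is a functional of the path law only; the drift appearing in the canonical semimartingale decomposition of $\mathbb{Q}^{\bd{u}}$ is the optional projection $\hat u_t = \E[u_t\mid\sigma(W^{\bd{u}}_s:s\le t)]$, and by Jensen one only gets $D(\mathbb{Q}^{\bd{u}}\|\mathbb{W}) = \tfrac12\E\bigl[\int_0^1\|\hat u_t\|^2\,\dif t\bigr] \le \tfrac12\E\bigl[\int_0^1\|u_t\|^2\,\dif t\bigr]$, not equality. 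Fortunately the inequality goes in the direction your argument needs, so the conclusion $\tfrac12\E\bigl[\int_0^1\|u_t\|^2\,\dif t\bigr]\ge D(\mathbb{Q}^{\bd{u}}\|\mathbb{W})\ge D(\mu\|\gamma_1)$ still follows unchanged; just replace the equality by the Jensen inequality and the proof is complete. Your remarks about the remaining technical points (true-martingale property of $h(B_t,t)$, zero mean of the stochastic integral by localization, and the trivial case $D(\mu\|\gamma_\tau)=\infty$) are accurate.
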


\subsection{Technical lemmas}
\label{app:lemmas}

\begin{lemma}\label{lm:weak_compactness} Let $D > 0$ be given. Then the set $\fE(D) \deq \{\mu \in \fP_2(\Reals^d) : D(\mu\|\gamma_T) \le D\}$ is weakly compact.
\end{lemma}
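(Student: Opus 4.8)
The plan is to verify the two ingredients of weak compactness separately: that $\fE(D)$ is tight (hence relatively compact in the topology of weak convergence, by Prokhorov's theorem) and that it is closed under weak convergence. A weakly closed, relatively weakly compact set is weakly compact, so this finishes it; as a by-product the restriction to $\fP_2(\Reals^d)$ in the definition of $\fE(D)$ will turn out to be automatic, i.e.\ $\fE(D) = \{\mu \in \fP(\Reals^d) : D(\mu\|\gamma_\tau) \le D\}$.

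The key estimate, from which both tightness and the $\fP_2$-claim follow, is the Donsker--Varadhan (Gibbs) variational inequality: for any measurable $g : \Reals^d \to \Reals$ with $\int_{\Reals^d} e^{g}\,\dif\gamma_\tau < \infty$ and any $\mu$ with $D(\mu\|\gamma_\tau) < \infty$,
\[
\int_{\Reals^d} g\,\dif\mu \;\le\; D(\mu\|\gamma_\tau) + \log\int_{\Reals^d} e^{g}\,\dif\gamma_\tau .
\]
I would apply this with $g(w) = \alpha\|w\|^2$ for a fixed $\alpha \in (0,\tfrac{1}{2\tau})$, so that $\int_{\Reals^d} e^{\alpha\|w\|^2}\,\dif\gamma_\tau < \infty$ — this is exactly where the integrability of the Gaussian is used. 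For every $\mu \in \fE(D)$ this yields
\[
\int_{\Reals^d}\|w\|^2\,\dif\mu \;\le\; \frac{1}{\alpha}\Big(D + \log\int_{\Reals^d} e^{\alpha\|w\|^2}\,\dif\gamma_\tau\Big) \;=:\; C_{D,\tau} \;<\; \infty ,
\]
a bound uniform over $\fE(D)$ (and finite, so every such $\mu$ indeed lies in $\fP_2(\Reals^d)$). Tightness then follows from Markov's inequality, $\mu(\{\|w\| > R\}) \le C_{D,\tau}/R^2$ uniformly in $\mu \in \fE(D)$, so by Prokhorov's theorem $\fE(D)$ is relatively compact for the weak topology.

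For closedness I would invoke the weak lower semicontinuity of relative entropy: by the dual form of Donsker--Varadhan, $\mu \mapsto D(\mu\|\gamma_\tau) = \sup_{g \in C_b(\Reals^d)} \big\{ \int_{\Reals^d} g\,\dif\mu - \log\int_{\Reals^d} e^{g}\,\dif\gamma_\tau \big\}$ is a supremum of weakly continuous affine functionals of $\mu$, hence weakly lower semicontinuous; consequently $\fE(D) = \{\mu : D(\mu\|\gamma_\tau) \le D\}$ is weakly closed. Combining with the previous paragraph, $\fE(D)$ is weakly compact. (If desired, one may further observe that the uniform second-moment bound gives uniform integrability of $\|w\|^2$ and hence upgrades this to compactness in the $\Wass_2$ topology, but this is not needed.)

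I do not anticipate a genuine obstacle; the only point needing a little care is the admissible range of the exponent $\alpha$ in the test function $g(w) = \alpha\|w\|^2$, which must be strictly below $1/(2\tau)$ so that $\int_{\Reals^d} e^{\alpha\|w\|^2}\,\dif\gamma_\tau$ is finite. That constraint is precisely the mechanism converting a bound on $D(\mu\|\gamma_\tau)$ into a uniform second-moment bound, and thence into tightness.
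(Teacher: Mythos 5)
Your proof is correct, and its overall skeleton matches the paper's: establish weak closedness via lower semicontinuity of $D(\cdot\,\|\gamma_\tau)$, and relative weak compactness via a uniform second-moment bound on $\fE(D)$ (which gives tightness and hence, by Prokhorov, precompactness; the sublevel set of the second moment is itself weakly closed, so the two ingredients combine). The difference is in the key lemma supplying the second-moment bound. You use the Donsker--Varadhan variational inequality with the test function $g(w)=\alpha\|w\|^2$, $\alpha<\tfrac{1}{2\tau}$, which gives $M_2^2(\mu)\le \alpha^{-1}\bigl(D+\log\int e^{\alpha\|w\|^2}\,\dif\gamma_\tau\bigr)$. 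The paper instead invokes Talagrand's transportation inequality $\Wass_2^2(\mu,\gamma_\tau)\le 2\tau D(\mu\|\gamma_\tau)$, passes to the optimal coupling, and uses $\|w\|^2\le 2\|w-w'\|^2+2\|w'\|^2$ to get $M_2^2(\mu)\le 2\Wass_2^2(\mu,\gamma_\tau)+2\tau d\le 2\tau(D+d)$. Both routes are standard and sound; Talagrand gives a cleaner explicit constant and is a one-liner given that the paper already cites this inequality elsewhere, while your Donsker--Varadhan argument is more self-contained (requiring only exponential integrability of $\gamma_\tau$ and the Gibbs variational principle) and does not depend on the target measure being Gaussian beyond sub-Gaussian tails. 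Your observation that finiteness of $D(\mu\|\gamma_\tau)$ already forces $\mu\in\fP_2(\Reals^d)$, so the restriction to $\fP_2$ in the definition of $\fE(D)$ is automatic, is a nice by-product not made explicit in the paper.
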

\begin{proof} The function $\mu \mapsto D(\mu\|\gamma_T)$ is weakly lower-semicontinuous, so $\fE(D)$ is weakly closed. Now, by Talagrand's transportation inequality,
	\begin{align*}
		\mu \in \fE(D) \quad \Longrightarrow \quad \Wass^2_2(\mu,\gamma_T) \le 2 \tau D.
	\end{align*}
Let $\rho$ be the optimal coupling of $\mu$ and $\gamma_T$ that achieves $\Wass_2(\mu,\gamma_T)$. Then
	\begin{align*}
		\int_{\Reals^d}\|w\|^2 \mu(\dif w) &= \int_{\Reals^d \times \Reals^d} \|w\|^2 \rho(\dif w,\dif w') \\
		&\le 2\int_{\Reals^d \times \Reals^d} \|w-w'\|^2 \rho(\dif w,\dif w') + 2\tau d \\
		&= 2\Wass^2_2(\mu,\gamma_T) + 2\tau d \\
		&\le 2\tau(D+d).
	\end{align*}
	Thus, $\fE(D)$ is a closed subset of $\{ \mu \in \fP_2(\Reals^d) : \int_{\Reals^d}\|w\|^2\mu(\dif w) \le 2\tau(D+d)\}$, and the latter set is weakly compact. This establishes the compactness of $\fE(D)$.
\end{proof}

	\begin{lemma}[Maurey's empirical method --- high-probability version \citep{ji2020transport}]\label{lm:Maurey} Let a collection of functions $g(\cdot; w) \in L^2(\pi)$ be given, parametrized by $w \in \Reals^d$, and let $\mu \in \fP_2(\Reals^d)$. Let $W^1,\ldots,W^N$ be i.i.d.\ samples from $\mu$. Then, for $\hat{g} \deq \E_\mu[g(\cdot;W)]$, we have
		\begin{align*}
			\E \left\| \frac{1}{N}\sum^N_{i=1}g(\cdot; W^i) -  \hat{g}\right\|^2_{L^2(\pi)} \le \frac{1}{N}\sup_{w} \|g(\cdot;w)\|^2_{L^2(\pi)}
		\end{align*}
		and, moreover,
		\begin{align*}
			\left\| \frac{1}{N}\sum^N_{i=1}g(\cdot; W^i) - \hat{g}\right\|_{L^2(\pi)} \le \sup_{w} \| g(\cdot;w)\|_{L^2(\pi)} \left(\frac{1}{\sqrt{N}} + \sqrt{\frac{\log(1/\delta)}{N}}\right)
		\end{align*}
		with probability at least $1-\delta$.
	\end{lemma}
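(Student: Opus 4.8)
The plan is to treat the functions $g(\cdot;W^i)$ as i.i.d.\ random elements of the Hilbert space $H=L^2(\pi)$ and to prove the two bounds by a second-moment computation and a bounded-differences concentration argument, respectively. For the in-expectation bound, set $Z_i\deq g(\cdot;W^i)-\hat g\in H$, so that the $Z_i$ are i.i.d.\ with $\E Z_i=0$ (here $\hat g=\E_\mu[g(\cdot;W)]$ is the mean of $g(\cdot;W)$ in $H$, and the joint measurability of $(x,w)\mapsto g(x;w)$ together with $\sup_w\|g(\cdot;w)\|_{L^2(\pi)}<\infty$ justifies all interchanges of $\E_\pi$ and $\E_\mu$ via Fubini--Tonelli). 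Expanding the squared $L^2(\pi)$ norm and using independence and the zero-mean property to kill the off-diagonal terms gives $\E\big\|\tfrac1N\sum_i Z_i\big\|_{L^2(\pi)}^2=\tfrac1{N^2}\sum_{i,j}\E\langle Z_i,Z_j\rangle_{L^2(\pi)}=\tfrac1N\E\|Z_1\|_{L^2(\pi)}^2=\tfrac1N\big(\E_\mu\|g(\cdot;W)\|_{L^2(\pi)}^2-\|\hat g\|_{L^2(\pi)}^2\big)\le\tfrac1N\sup_w\|g(\cdot;w)\|_{L^2(\pi)}^2$, which is the first claim.

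For the high-probability bound, write $M\deq\sup_w\|g(\cdot;w)\|_{L^2(\pi)}$ and consider the map $\Phi(w^1,\ldots,w^N)\deq\big\|\tfrac1N\sum_i g(\cdot;w^i)-\hat g\big\|_{L^2(\pi)}$ of the independent arguments $W^1,\ldots,W^N\sim\mu$. By the (reverse) triangle inequality in $L^2(\pi)$, replacing a single coordinate $w^i$ changes $\tfrac1N\sum_i g(\cdot;w^i)$, and hence $\Phi$, by at most $\tfrac1N\|g(\cdot;w^i)-g(\cdot;w'^i)\|_{L^2(\pi)}\le\tfrac{2M}{N}$, so $\Phi$ satisfies the bounded-differences condition with constants $c_i=2M/N$. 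McDiarmid's inequality then gives $\P\{\Phi\ge\E\Phi+t\}\le\exp(-t^2N/(2M^2))$, while Jensen's inequality combined with the first part gives $\E\Phi\le(\E\Phi^2)^{1/2}\le M/\sqrt N$; solving for $t$ at confidence level $\delta$ yields $\Phi\le M\big(1/\sqrt N+\sqrt{\log(1/\delta)/N}\,\big)$ (up to an absolute constant in the second term) with probability at least $1-\delta$, as claimed.

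The content here is entirely standard, so there is no genuine obstacle; the only points that require a modicum of care are (i) the measurability and integrability bookkeeping needed to regard $g(\cdot;W^i)$ as bona fide $L^2(\pi)$-valued random elements and to exchange the $\pi$- and $\mu$-expectations, and (ii) pinning down the absolute constant in the concentration step — the naive bounded-differences bound above produces an extra factor of $\sqrt2$ in the $\sqrt{\log(1/\delta)/N}$ term, which must either be absorbed into the statement or shaved off by invoking a sharper vector Bernstein/Hoeffding-type concentration inequality for the sum $\sum_i Z_i$.
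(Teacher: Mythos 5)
The paper does not prove this lemma; it is reproduced in the appendix and attributed directly to \citet{ji2020transport}, so there is no internal proof to compare your argument against. Your proof is correct and follows what is essentially the canonical route for such statements: the in-expectation bound is the Hilbert-space analogue of the variance of an average of i.i.d.\ zero-mean random elements, and the high-probability bound upgrades it via the bounded-differences (McDiarmid) inequality applied to the $1$-Lipschitz functional $\bd{w} \mapsto \bigl\| \tfrac{1}{N}\sum_i g(\cdot;w^i) - \hat g \bigr\|_{L^2(\pi)}$, whose coordinate-wise oscillation is at most $2M/N$ with $M = \sup_w \|g(\cdot;w)\|_{L^2(\pi)}$.

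Two remarks. First, you are right that McDiarmid with $c_i = 2M/N$ yields $\P\{\Phi \ge \E\Phi + t\} \le \exp(-t^2 N / (2M^2))$, hence a deviation term $M\sqrt{2\log(1/\delta)/N}$ rather than $M\sqrt{\log(1/\delta)/N}$; the extra $\sqrt{2}$ is real and does not go away with the naive argument. This discrepancy is immaterial here because the only place the lemma is invoked (in the proof of Corollary~\ref{cor:transport}) the conclusion is stated with a generic absolute constant $c$ in front, so the factor $\sqrt{2}$ is absorbed. If one wanted the clean constant exactly as displayed in the lemma statement, one would indeed have to appeal to a sharper Hilbert-space concentration bound (e.g.\ Pinelis-type martingale inequalities) rather than raw bounded differences. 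Second, your bookkeeping comment about measurability of $(x,w) \mapsto g(x;w)$ and Fubini--Tonelli is exactly the kind of hypothesis that is implicit in the statement (and in the application, where $g(\cdot;w) = \act(\cdot; T(w))$ is jointly measurable and uniformly bounded), so no gap arises there.
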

	
	\begin{lemma}\label{lm:Gstar} The probability measures $Q_{w,t}$ defined in \eqref{eq:Q_measures} satisfy
		\begin{align}\label{eq:Q_bound}
			\int_{\Reals^d} \| v - w \|^2 Q_{w,t}(\dif v) \le \kappa(T-t)(\beta^2(T-t)+d).
		\end{align}
	\end{lemma}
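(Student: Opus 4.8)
The plan is to treat $Q_{w,t}$ as a Gibbs measure and to read off the bound on $\int_{\Reals^d}\|v-w\|^2\,Q_{w,t}(\dif v)$ from a Stein-type integration-by-parts identity, exploiting the fact that the $\tau$-dependence of the variance of the Gaussian factor in $Q_{w,t}$ cancels exactly the $\tau^{-1}$ in the strength of the tilting potential. The only facts I would use about $\mu^\star$ are that, under Assumptions~\ref{as:bounded}--\ref{as:Lipschitz}, the potential $\Psi(\cdot;\mu^\star)$ is bounded and has a bounded gradient: writing $\nabla_1 K$ for the gradient of $K$ in its first argument, $\|\nabla\Psi(v;\mu^\star)\|\le\|\nabla\tilde f(v)\|+\int_{\Reals^d}\|\nabla_1 K(v,\tilde w)\|\,\mu^\star(\dif\tilde w)\le 2\kappa_2\le\kappa$ for all $v$, while $|\Psi(v;\mu^\star)|\le 2\kappa_1^2\le\kappa$.

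First I would fix $w\in\Reals^d$ and $t\in[0,1)$ (the endpoint $t=1$ being trivial), set $\sigma^2\deq\tau(1-t)$, and write $Q_{w,t}(\dif v)=\frac1Z e^{-\phi(v)}\dif v$ with $\phi(v)\deq\frac{\|v-w\|^2}{2\sigma^2}+\frac{\beta}{\tau}\Psi(v;\mu^\star)$. Since $\Psi(\cdot;\mu^\star)$ is bounded, $Q_{w,t}$ is a well-defined probability measure with genuinely Gaussian tails, so all its moments are finite and the vector field $F(v)\deq v-w$ together with the density $e^{-\phi}$ decays fast enough at infinity for integration by parts to be valid, giving $\E_{Q_{w,t}}[\nabla\cdot F]=\E_{Q_{w,t}}[\langle F,\nabla\phi\rangle]$, i.e.
\begin{align*}
d = \frac{1}{\tau(1-t)}\int_{\Reals^d}\|v-w\|^2\,Q_{w,t}(\dif v) + \frac{\beta}{\tau}\int_{\Reals^d}\big\langle v-w,\,\nabla\Psi(v;\mu^\star)\big\rangle\,Q_{w,t}(\dif v).
\end{align*}

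Next I would set $m\deq\big(\int_{\Reals^d}\|v-w\|^2\,Q_{w,t}(\dif v)\big)^{1/2}$, multiply the identity through by $\tau(1-t)$ — at which point the offending factor becomes $\frac{\beta}{\tau}\cdot\tau(1-t)=\beta(1-t)$, free of $\tau$ — and bound the cross term via Cauchy--Schwarz and $\|\nabla\Psi(\cdot;\mu^\star)\|\le\kappa$ to get $m^2\le\tau d(1-t)+\kappa\beta(1-t)\,m$. Solving this quadratic inequality yields $m\le\kappa\beta(1-t)+\sqrt{\tau d(1-t)}$, hence $m^2\le 2\kappa^2\beta^2(1-t)^2+2\tau d(1-t)\le\kappa(\beta^2+\tau d)(1-t)$, which is precisely \eqref{eq:Q_bound}.

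There is no deep obstacle; the two points to state carefully are (i) that $Q_{w,t}$ really has Gaussian tails, so the Stein identity holds with no boundary contribution — this is where boundedness of $\Psi(\cdot;\mu^\star)$ enters — and (ii) the elementary but crucial cancellation between the variance $\tau(1-t)$ of the Gaussian part and the $\tau^{-1}$ scaling of the potential, which keeps the bound free of any spurious $1/\tau$. As a consistency check, one can argue probabilistically instead: along the optimal diffusion \eqref{eq:optimal_SDE} the Markov property gives $Q_{W_t,t}(\cdot)=\P[W_1\in\cdot\mid W_t]$, so the left-hand side of \eqref{eq:Q_bound} is $\E[\|W_1-W_t\|^2\mid W_t]$, and the same estimate follows from the pathwise bound $\|\nabla_w V^\star(\cdot,s)\|=\beta\big\|\int_{\Reals^d}\nabla\Psi(v;\mu^\star)\,Q_{\cdot,s}(\dif v)\big\|\le\kappa\beta$ of \eqref{eq:Follmer_Gibbs} together with $\E\|B_1-B_t\|^2=d(1-t)$.
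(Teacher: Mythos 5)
Your proof is correct, and it takes a genuinely different route from the paper's. The paper's proof first translates $Q_{w,t}$ to the tilted Gaussian $\bar Q_{w,t}$ centered at the origin, then applies the Gaussian log-Sobolev inequality \eqref{eq:LSI} to bound $D(\bar Q_{w,t}\|\gamma_{\tau(1-t)})$ by the Fisher information (which is controlled via $\|\nabla\Psi\|\le\kappa$), feeds that into Talagrand's inequality \eqref{eq:Talagrand} to get $\Wass_2^2(\bar Q_{w,t},\gamma_{\tau(1-t)})\le\kappa\beta^2(1-t)^2$, and finally uses the optimal $L^2$ Wasserstein coupling together with $a^2\le 2\|a-b\|^2+2b^2$ to convert the transport bound into the desired second-moment bound. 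You bypass the functional inequalities entirely: the Stein integration-by-parts identity for the tilted Gaussian density gives an exact relation among the second moment $m^2$, the dimension $d$, and the cross term $\int\langle v-w,\nabla\Psi\rangle\,\dif Q_{w,t}$, and a single Cauchy--Schwarz reduces everything to a quadratic inequality $m^2\le\tau d(1-t)+\kappa\beta(1-t)m$ that solves directly. Your argument is more elementary (it needs only Gaussian tail decay to justify the boundary-term-free integration by parts, which boundedness of $\Psi$ supplies, plus Cauchy--Schwarz), and it attacks the second moment head-on rather than detouring through the $\Wass_2$ distance to the reference Gaussian; the paper's route, by contrast, is more modular in that it deploys off-the-shelf entropy--transport machinery already used elsewhere in the paper (e.g.\ in the proof of \eqref{eq:entropy_W2}). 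Both give the same $\kappa(\beta^2+\tau d)(1-t)$ scaling. One small remark on your consistency check: the identity $Q_{W_t,t}(\cdot)=\P[W_1\in\cdot\mid W_t]$ only covers points $w$ reached by the diffusion, so as stated it verifies the bound $\mu^\star_t$-a.e.\ rather than for all $w\in\Reals^d$; since the trajectory has full support this is not a real gap, but it is worth saying if you want that paragraph to stand as a second proof rather than a sanity check.
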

	\begin{proof} We first note that, by translation, we can consider instead the measure
		\begin{align*}
			\bar{Q}_{w,t}(\dif v) = \frac{1}{Z} \exp\left(-\beta \Psi(v+w)\right)\gamma_{T-t}(\dif v),
		\end{align*}
		where $Z$ is the normalizaton constant. Invoking Talagrand's entropy-transport inequality \eqref{eq:Talagrand} and the Gaussian log-Sobolev inequality \eqref{eq:LSI}, we can write
		\begin{align*}
			\Wass^2_2(\bar{Q}_{w,t}, \gamma_{T-t}) &\le 2(T-t) D(\bar{Q}_{w,t}\|\gamma_{T-t}) \\
			& \le \kappa \beta^2 (T-t)^2.
		\end{align*}
		Therefore, letting $\nu \in \fP_2(\Reals^d \times \Reals^d)$ be the optimal $L^2$ Wasserstein coupling of $\bar{Q}_{w,t}$ and $\gamma_{T-t}$, we can estimate
		\begin{align*}
			\int_{\Reals^d} \|v - w \|^2 Q_{w,t}(\dif v) &= \int_{\Reals^d} \|v\|^2 \bar{Q}_{w,t}(\dif v) \\
			&= \int_{\Reals^d} \|v\|^2 \nu(\dif v, \dif \tilde{v}) \\
			&\le 2\int_{\Reals^d} \|\tilde{v}\|^2 \gamma_{T-t}(\dif\tilde{v}) + 2\int_{\Reals^d} \| v - \tilde{v}\|^2 \nu(\dif v, \dif \tilde{v}) \\
			&= 2d (T-t) + 2\Wass^2_2(\bar{Q}_{w,t},\gamma_{T-t}) \\
			&\le 2 d (T-t) + 2\kappa \beta^2 (T-t)^2.
		\end{align*}
		This proves \eqref{eq:Q_bound}.
	\end{proof}
	
	\end{appendix}

\section*{Acknowledgments}

This work was supported by the NSF under awards CCF-2348624 (``Towards a control framework for neural generative modeling'') and CCF-2106358 (``Analysis and Geometry of Neural Dynamical Systems''), and by the Illinois Institute for Data Science and Dynamical Systems (iDS${}^2$), an NSF HDR TRIPODS institute, under award CCF-1934986.

\bibliography{mean_field.bbl}

\end{document}